\documentclass[twoside,11pt]{article}

\usepackage{amsmath}
\usepackage{tikz}
\usepackage{float}
\usepackage{longtable}
\usetikzlibrary{arrows.meta,positioning,shapes.geometric}

\usepackage[preprint]{jmlr2e}
\hypersetup{
  pdftitle={Human--LLM Deliberation as Interactive Proof: Conditions for Verifiability Without Transparency},
  pdfauthor={Baotong Zhang; Dean Foster; Jo\~ao Sedoc},
  pdfsubject={Conditional guarantees for resource-bounded human--LLM verification},
  pdfkeywords={Interactive proofs, Human-AI decision-making, Resource-bounded verification, Human-LLM deliberation}
}

\newtheorem{condition}{Condition}
\newtheorem{assumption}{Assumption}

\newcommand{\compactprotocol}{
  \setlength{\itemsep}{0.15em}
  \setlength{\parsep}{0pt}
  \setlength{\parskip}{0pt}
  \setlength{\topsep}{0.25em}
  \setlength{\partopsep}{0pt}
}

\ShortHeadings{Verifiability without Transparency}{Zhang, Foster, and Sedoc}
\firstpageno{1}

\begin{document}
\raggedbottom

\title{Human--LLM Deliberation as Interactive Proof: Conditions for Verifiability Without Transparency}

\author{\name Baotong Zhang \email baotong.zhang@stern.nyu.edu \\
       \addr Stern School of Business\\
       New York University\\
       New York, NY, USA
       \AND
       \name Dean Foster \email
       foster@UPenn.edu \\
       \addr Amazon \\
       New York, NY, USA 
       \AND
       \name João Sedoc \email jsedoc@stern.nyu.edu \\
       \addr Stern School of Business\\
       New York University\\
       New York, NY, USA}

\maketitle
\thispagestyle{plain}

\begin{abstract}
When an LLM supplies an argument that a user could not readily construct, how can the user decide whether to accept its claim? Inspired by interactive proofs, we model human--LLM deliberation as an interaction between a prover with unrestricted internal search and a resource-bounded human verifier. The verifier requests and checks supporting details without access to the LLM's internal state. Passed checks accumulate evidence toward an acceptance threshold. We prove anytime-valid soundness against adaptive provers: the probability of ever accepting a false claim is at most a chosen error level, provided the task supplies bounds on false passes and human checking errors that remain valid after every relevant history. A finite-horizon completeness bound additionally requires bounds on the adequacy of honest responses and sufficient diagnostic progress. Further checks can strengthen the evidence for acceptance, but each requires another adequate response and reliable human effort. Whether this tradeoff permits certification depends on the verifier's effort budget, cognitive load, expertise, and fatigue. We identify conditions under which the supplied bounds certify a specified sequence of local checks but not a specified global check under the same resource budgets.
\end{abstract}

\begin{keywords}
  Interactive Proofs, Human-AI Decision-Making, Resource-Bounded Verification, Verifiability, Human-LLM Deliberation
\end{keywords}

\section{Introduction}
\label{sec:intro}

When an LLM supplies an argument that a user could not readily construct, the user still needs grounds for deciding whether to accept its claim. Our central question is whether that user can obtain such grounds through challenges within their own resource budget. Interactive proofs motivate a division of work between a powerful prover and a bounded verifier \citep{Goldwasser1989TheKC}: the prover supplies arguments or evidence, while the verifier selects and executes checks. We study the conditions under which this division supports reliable human--LLM deliberation.

The question arises in human--AI decision support, where predictions can inform consequential choices \citep{kleinberg2018human} and users receive performance information, explanations, or other assistance \citep{lu2025can,lai2019human,lai2023towards}. LLMs extend this setting by producing arguments and responding to follow-up questions. Early work on GPT-4 documented broad capabilities across difficult tasks \citep{bubeck2023sparks}, while hallucination research identifies fabrication, contradiction, and unfaithfulness to source material or context \citep{ji2023survey,huang2023hallucination}. Prompt-only self-correction based on a model's own feedback has not been robustly demonstrated on general tasks \citep{kamoi2024llm}.

A user faced with an uncertain answer may ask the LLM to explain it. The explanation can make the output more inspectable \citep{zhao2024explainability}, but its plausibility and faithfulness are distinct \citep{jacovi-goldberg-2020-towards}. Self-explanations may fail to track answer generation \citep{agarwal2024faithfulness}, and chain-of-thought can omit biasing influences \citep{turpin2023language}. Optimizing plausibility may itself mislead users \citep{jin2023plausibility}. The user therefore still needs to identify what in the explanation can be checked. Following the verifiability criterion of \citet{fok2024verifiability}, we focus on evidence about the claim's consequences or a supporting derivation. Here, \emph{verifiability without transparency} means certification without access to the LLM's internal state or reconstruction of its generation process.

The user need not request the entire supporting argument at once. Starting from a high-level argument, they can question a premise, request support for an inference, propose a counterexample, or ask for an executable test. The LLM supplies details in response, while the target claim remains fixed. Checking the requested details separately may make the review more manageable for a user with limited attention or domain expertise. Deliberation can expose competing arguments \citep{fishkin2018democracy,mercier2011why}, and human--LLM deliberation and tool-interactive critique have shown benefits in some settings \citep{ma2025towards,madaan2023self,gou2024critic}.

For example, an organizer reviewing a conference schedule can challenge the claim that it has no conflicts by checking selected speaker and room constraints against fixed records (Section~\ref{sec:running_example}). The organizer needs a challenge policy they can afford to select and execute, with a justified probability of exposing a conflict if one exists. Task structure, randomized audits, external checkers, formal verification, or suitable calibration may support this diagnostic guarantee. We take the policy and its checking guarantees as inputs and analyze how the resulting checks support an acceptance decision; constructing them remains a task-specific requirement. Clarification can guide later questions, with diagnostic checks supplying the certification evidence.

We formalize this setting with an LLM prover whose internal search is unrestricted and a resource-bounded verifier. Our model allows the LLM to give inadequate responses and the human to make checking errors. Adequate honest responses and correctly executed checks recover the ideal case. This separates the computational division of work from the cognitive burden of carrying it out: even a computationally efficient check may exceed a particular person's attention, knowledge, or effort budget.

\begin{samepage}
We make three contributions:
\begin{enumerate}
\item We specify a finite-horizon challenge--response protocol for a fixed claim, including verifier-directed requests for evidence, a certificate-based acceptance rule, and conservative rejection when the verification budget is exhausted. The verifier can adapt questions and request details as needed.
\item We prove conditional certification guarantees: anytime-valid soundness against adaptive provers and finite-horizon completeness under response and execution reliability bounds, without requiring independent rounds. We retain a diagnostic guarantee during adaptive follow-ups by reserving a chosen probability for reference checks. Instantiations meeting the additional requirements of Corollary~\ref{thm:conditional_ip} yield interactive proofs.
\item We identify which review lengths meet both certification targets within the verifier's resources (Section~\ref{sec:fallibility}). This certifiable effective region depends on effort, cognitive load, expertise, and fatigue. Proposition~\ref{prop:separation} identifies when the supplied bounds certify a specified decomposed review but not a specified global checker under common resource constraints and separately justified diagnostic bounds. The thresholds show how much cognitive load must fall to compensate for splitting effort across rounds and relying on repeated adequate responses and correct checks.
\end{enumerate}
\end{samepage}

The LLM-only simulations test whether answers accepted without revision are more accurate than the rest; they do not validate the human-verifier assumptions. Appendix~\ref{app:hvzk} treats the separate privacy question: under the simulation and prefix-consistency assumptions stated there, stopping at the certificate threshold preserves honest-verifier zero knowledge.

\section{Related Work}
\label{sec:related}

A human reviewing an LLM's answer must decide what evidence to request and when it is enough to accept the claim. Related work offers several ways to organize this interaction and quantify its reliability. We compare how these approaches allocate verification work, which errors their guarantees control, and what they require of the evaluator.

\subsection{Interactive Proofs and Machine Prover--Verifier Systems}

Interactive proofs motivate our central abstraction: a powerful prover makes a claim, while a weaker verifier interrogates it before deciding whether to accept. Classical theory formalizes this division through completeness and soundness \citep{Goldwasser1989TheKC}. For a human verifier, even a prescribed check may be executed incorrectly. Our model separates this source of error from the prover's failure to supply an adequate response.

Several approaches train models to make their outputs easier to verify. Prover--Verifier Games learn checkable answers through a game between prover and verifier \citep{anil2021learning}; neural interactive proofs relate approximate verifier-leading Stackelberg equilibria to valid proof systems \citep{hammond2025neural}. Prover--verifier training also evaluates whether weaker model verifiers and time-constrained human raters can check the resulting answers \citep{kirchner2024prover}. Self-Proving Models are closer to our setting after an answer has been produced: a manually specified verifier interactively checks an input--output pair \citep{amit2025models}. Their framework separates worst-case verifier soundness from the model's ability to produce accepted proofs under an input distribution. We likewise distinguish false acceptance from successful verification of true claims.

Zero knowledge imposes a separate requirement: an efficient simulator must reproduce the verifier's view without access to the prover's private information. Appendix~\ref{app:hvzk} gives a conditional preservation result under stopping. It assumes a simulator for the full continuation interaction and agreement between its stopped prefix and the actual protocol.

\subsection{Scalable Oversight and Single-Advocate Protocols}

Some oversight proposals use evaluation to construct training signals. Iterated amplification uses decomposition to construct a training target \citep{christiano2018amplification}, while weak-to-strong supervision trains a stronger model on labels supplied by weaker models \citep{burns2024weak}. We consider the subsequent decision faced by a human reviewing an answer from a fixed model.

Consultancy closely matches this interaction: a single advocate answers a weaker judge's follow-up questions. Studies compare it with debate, where two competing agents present arguments to a judge \citep{irving2018debate}, using both human judges \citep{michael2023debate} and LLM judges \citep{kenton2024scalable}. When advocates are assigned correct or incorrect positions, debate is more robust to incorrect advocacy across the tasks studied by \citet{kenton2024scalable}. Other experiments test whether stronger debaters help judges identify truth when relevant information is withheld \citep{khan2024debating}, or whether people can use an unreliable model to solve tasks that neither can reliably solve alone \citep{bowman2022measuring}. These studies make the advocate's influence on the judge a central concern for verification.

Formal oversight protocols obtain guarantees from assumptions about the computation being checked. Doubly-efficient debate gives protocols for verifying stochastic agents \citep{browncohen2024debate}; \citet{chen2026avoiddebate} construct interactive proofs and arguments for oracle-aided computations under robustness or low-degree-oracle assumptions. Our analysis instead takes task-specific diagnostic and execution bounds as inputs. Because the advocate can shape later questions, these bounds must remain valid after every relevant history. Section~\ref{sec:challenge_design} and Appendix~\ref{app:anchored_validity} give one way to preserve a diagnostic bound: mix adaptive follow-ups with reference checks whose detection probability is justified against every admissible prover. The mixing probability is chosen before each round, and the bound applies to the resulting policy under correct execution.

\subsection{Local Checking and Sequential Certification}

Before a transcript can supply statistical evidence, the task needs concrete claims and check outcomes. Argumentation frameworks organize attacks, inference rules, and public commitments \citep{dung1995acceptability,modgil2014aspic,prakken2005coherence}. FActScore evaluates atomic factual precision against a knowledge source \citep{min2023factscore}, while ProgramFC composes local results through an executable fact-checking program \citep{pan2023programfc}. Draft, Sketch, and Prove completes formal proof sketches in Isabelle; a successfully checked proof certifies the formalized theorem, while correspondence to the original informal claim remains a separate issue \citep{jiang2023draft}.

A correctly executed check may still miss an error. Combining a bound on that event with a bound on incorrect human execution gives our per-round false-pass bound. Test supermartingales and e-processes provide the established machinery for accumulating such evidence under adaptive collection and optional stopping \citep{shafer2011test,ramdas2023gametheoretic}. Our soundness proof applies this construction and Ville's inequality to the conditional bounds. Completeness also needs adequate honest responses and enough diagnostic progress before timeout; average checker accuracy does not supply these conditions.

Recent AI verification methods use sequential testing for related decisions. E-valuator monitors agent trajectories using black-box verifier scores, with a density-ratio construction and PAC calibration of decision thresholds \citep{sadhuka2025evaluator}. It controls the probability of falsely flagging a successful trajectory as unsuccessful. \citet{cho2026release} accumulate evidence calibrated against a conservative failure reference pool to control release on workflow-infeasible tasks and analyze finite-horizon release power. Their null concerns the workflow's ability to produce a reliable solution. We test the falsity of a fixed claim, with local bounds required to hold for every admissible adaptive prover.

\subsection{Learned Verification, Self-Correction, and Selective Prediction}

Verifier feedback can help select an answer or revise it before acceptance. Outcome verifiers rank candidate mathematical solutions \citep{cobbe2021training}, while process supervision evaluates intermediate steps \citep{lightman2023verify}. LLM-as-judge systems use another model to assess completed outputs; agreement with human preferences coexists with position, verbosity, and self-enhancement biases \citep{zheng2023llm}. Chain-of-Verification uses model-generated questions to guide revision \citep{dhuliawala2023chain}, although prompted self-correction remains unreliable without external feedback across broader evaluations \citep{kamoi2024llm}. Our certificate can use a model-generated score or critique only through a prescribed check with a justified conditional false-pass bound.

Our protocol and simulations build on Prover--Verifier Deliberation (PVD) \citep{sedoc2026trust}. Frozen LLMs exchange challenges and responses, allowing acceptance, rejection, answer revision, and retries; a challenge-first variant requires a challenge before acceptance. PVD labels an attempt \emph{Accept + No Change} (ANC) when the verifier accepts and the prover has not changed its answer. In our formal protocol, the prover can supply further explanations and evidence, but the statement being checked remains unchanged throughout the run. Acceptance requires the accumulated evidence to reach a prescribed threshold. The simulations reuse ANC to examine whether stable acceptance is associated with correctness. They do not implement the certificate rule or establish the Effective Verifier assumption for human deployment.

Viewing ANC as a rule for selecting answers connects these simulations to selective classification. A selective classifier either returns its predicted label or abstains: coverage is the fraction of cases on which it predicts, and selective risk is the error rate among those predictions. \citet{geifman2017selective} use labeled samples to choose a confidence threshold and bound this error rate, with high probability, for new cases from the same distribution. A dialogue-based selection rule could likewise be evaluated on independent labeled tasks. Our soundness guarantee addresses a different question: for any given false claim, how likely is the verifier to accept it? The bound must hold against every admissible adaptive prover under the stated diagnostic and execution conditions.

\subsection{Human Checkability and Bounded Certification}

A check that is informative in principle may still be difficult for a person to carry out. Human-centered studies examine this problem through the assistance given to the evaluator. An LLM-based deliberation interface improved appropriate reliance and performance in an exploratory graduate-admissions task \citep{ma2025towards}, while cognitive forcing reduced overreliance more effectively than explanations alone \citep{bucinca2021trust}. In fact-checking, explanations can improve efficiency yet induce overreliance when convincingly wrong \citep{si2024truthfulness}. Receiver ability, attention, motivation, and effort also constrain costly communication \citep{dewatripont2005modes}. These findings motivate allowing execution reliability to vary with the person and the review conditions.

Breaking an argument into smaller checks changes both what the reviewer can detect and how the judgments must be combined. Critiques may expose otherwise missed errors \citep{saunders2022self}, but local judgments can disagree and their aggregation can change the conclusion \citep{saunders2020arguments}. Verifiability provides a relevant criterion for evaluating such assistance: it should make the recommendation easier to check \citep{fok2024verifiability}.

Section~\ref{sec:fallibility} relates effort, expertise, and cognitive load to execution reliability, with a finite round limit representing fatigue. Under common resource constraints and separately justified diagnostic and execution bounds, Proposition~\ref{prop:separation} identifies when those bounds certify a specified decomposed review but not a specified global check.

The next section specifies the fixed-claim protocol; Section~\ref{sec:zkp} states the local conditions and derives its certification guarantees.

\section{Interactive Human--LLM Deliberation Model}
\label{sec:protocol}
In this section, we define the inference-time interaction studied in the paper: an LLM prover ($P$) and a human verifier ($V$) deliberate over a decidable statement and end with \emph{Accept} or \emph{Reject}. If no verdict is reached by the fixed review horizon $T$, the protocol conservatively outputs \emph{Reject}. We first give the interaction interface and a diagnostic example under ideal execution. Section~\ref{sec:zkp} gives one certification model covering both LLM response inadequacy and human execution error, with ideal execution as a special case.

\subsection{Statements and Dialogue Histories}
\label{sec:token_space}
Let $\Sigma$ be a finite, non-empty alphabet and let $X=\Sigma^*$ be the set of all finite strings over $\Sigma$. Elements of~$X$ are token sequences, used here as the raw data type for statements, challenges, and prover responses. Plain string concatenation would make histories ambiguous, so $\parallel$ denotes an injective, self-delimiting encoding of finite typed message tuples into~$X$. We write
\[
 z_1 \parallel z_2 \parallel \cdots \parallel z_r
\]
for the encoded tuple, not for unmarked concatenation. Let $\mathcal H\subseteq X$ denote the set of well-formed encoded histories. If $h\in\mathcal H$ and $z\in X$ is a new typed message, then $h\parallel z$ denotes the corresponding appended history.

Let $S\subseteq X$ be a set of natural-language statements that can be rendered as logical propositions decidable with respect to a fixed base axiom system $\mathcal A_0$. On this domain, fix a total ground-truth map
\[
 v:S\to\{\top,\bot\},
\]
where $\top$ denotes ``true'' and $\bot$ denotes ``false.'' The language of true statements is
\[
 L=\{s\in S: v(s)=\top\}.
\]

Decidability is imposed through the choice of $S$, while $v$ specifies the truth value to be certified for each statement. The protocol gives neither party oracle access to $v$. The LLM acts through its private state, and the human issues verdicts from bounded information. Statements whose truth value is ambiguous, contested, or undecidable with respect to the chosen axioms are outside the model. Within this scope, $L$ is the target language used in the certification guarantees of Section~\ref{sec:zkp}.

\subsection{Production, Challenge, and Decision Functions}
\label{sec:agent_functions}
We model the LLM prover $P$ as a strategy with private state $\theta\in\Theta$. The state~$\theta$ may include trained weights, retrieval state, training data, and internal reasoning resources, but it is not shown to the verifier. Unrestricted internal search idealizes the prover's resources; its messages must still respect the protocol's format and length limits. For simplicity, $Q$ is deterministic. Stochastic generation can be handled by adding randomness to~$\theta$ or by drawing responses from a conditional distribution instead of using $Q$.

\begin{definition}[Production Function]
\label{def:production_function}
The production function of $P$ is a mapping
\[
Q:\mathcal H\times\Theta\to X.
\]
Given a well-formed history $h\in\mathcal H$ and private state $\theta$, $Q(h,\theta)$ produces a prover message $y\in X$.
\end{definition}

Completeness will require local response reliability from an honest prover; soundness quantifies over arbitrary adaptive prover strategies without modeling private beliefs or intentions.

The human verifier $V$ is a polynomial-time strategy using information state $i$, with challenge repertoire $C$, challenge-selection policy $\pi_V$, and decision function $U$. A concrete instantiation must supply challenge-selection and checking procedures executable within the verifier's budget.

\begin{definition}[Verifier Information State]
\label{def:verifier_information}
Let $\mathbb I$ be the universe of information that could be available to $V$, including domain knowledge, task instructions, external tools, and verifier-side axioms. Let $\mathcal I\subseteq\{i\subseteq\mathbb I: |i|<\infty\}$ be the set of finite, representable verifier information states. For a statement $s\in S$, the realized information state $i\in\mathcal I$ is the information $V$ can bring to bear when evaluating~$s$, including a verifier-side axiom set $\mathcal A_i$, distinct from the base system $\mathcal A_0$, and basic rules of logical deduction. The exchange itself is recorded in the history~$h$.
\end{definition}

\begin{definition}[Challenge Space]
\label{def:challenge_space}
Let $\mathcal C\subseteq X$ be the universe of well-formed challenge tokens. The challenge space is a set-valued map
\[
C:\mathcal H\times\mathcal I\to 2^{\mathcal C}\setminus\{\emptyset\}.
\]
Given history $h\in\mathcal H$ and information state $i\in\mathcal I$, $C(h,i)$ is the non-empty set of admissible challenges that $V$ can formulate from $h$ using~$i$. The set may include weak or redundant challenges; effectiveness is imposed later.
\end{definition}

In addition to the fixed background information $i$, the verifier maintains a finite encoded local state $\upsilon\in X$, initialized from $(s,i)$ and its own randomness. This state records check outcomes, certificate values, and verifier random bits already used, and is updated by the prescribed checks and certificate rule. It is available to the verifier but is not part of the prover-visible message history $h$.

\begin{definition}[Challenge Policy]
\label{def:challenge_policy}
The verifier selects a challenge from the conditional distribution $\pi_V(\cdot\mid h,i,\upsilon)$ in polynomial time. This distribution assigns probability only to challenges in $C(h,i)$. A deterministic verifier always selects the same challenge at a given history and state.
\end{definition}

\begin{definition}[Decision Function]
\label{def:verification_function}
$V$'s decision function is a possibly randomized algorithm that runs in polynomial time:
\[
U:\mathcal H\times\mathcal I\times X\to\{\text{Accept},\text{Reject},\text{Challenge}\}.
\]
Given a well-formed history $h$ whose first component is a statement $s\in S$, information state $i$, and current local state $\upsilon$, $U(h,i,\upsilon)$ returns one of three outputs:
\begin{itemize}
\item $\text{Accept}$: terminate with acceptance of the claim that $s\in L$;
\item $\text{Reject}$: terminate without accepting the claim;
\item $\text{Challenge}$: request another challenge--response round, subject to the review horizon $T$.
\end{itemize}
\end{definition}

Below, $\pi_V(\cdot\mid h,i)$, $U(h,i)$, and other verifier expressions omit the current local-state argument for brevity. Each choice and decision still uses the state then available. The observable histories defined below include that state and the random bits already used.

Separating challenge selection from the decision rule allows adaptive questioning under a fixed acceptance criterion. Section~\ref{sec:zkp} specifies that criterion using a certificate and states the required conditions on diagnostic validity, honest response adequacy, and human execution reliability.

\subsection{The Interactive Human-LLM Deliberation Protocol}
\label{sec:protocol_def}

Messages must be well formed and respect a length limit fixed before execution. The verifier's strategy issues only admissible challenges. Section~\ref{sec:ip_correspondence} specifies the polynomial resource bounds for the correspondence with classical interactive proofs.

\begingroup
\setlength{\abovedisplayskip}{4pt}
\setlength{\belowdisplayskip}{4pt}
\setlength{\abovedisplayshortskip}{2pt}
\setlength{\belowdisplayshortskip}{4pt}
\begin{definition}[Finite-horizon deliberation protocol]
\label{def:deliberation_protocol}
An Interactive Deliberation concerning a statement $s\in S$ is a token exchange between $P$ (private state $\theta\in\Theta$) and $V$ (information state $i\in\mathcal I$) in which $P$ aims to convince $V$ that $s\in L$. The claim and the object it concerns are fixed before initialization. For admissible messages, the protocol requires at least one challenge--response round beyond the initial response and sets a review horizon $T\in\mathbb N$ with $T\ge 1$: if $V$ has not reached Accept or Reject after $T$ rounds, the protocol outputs Reject. Diagnostic and clarification rounds both count toward this same finite budget $T$. A malformed or over-length prover message terminates the protocol immediately with Reject; no diagnostic evidence is credited to that message.

\begin{enumerate}
\compactprotocol
\begin{samepage}
\item \textbf{Initialization.}
\begin{itemize}
\compactprotocol
\item Initialize the verifier's local state from $(s,i)$ and its own randomness, before receiving any prover message.
\item The prover produces the initial response
\[
y_s=Q(\langle s\rangle,\theta),
\]
where $\langle s\rangle$ denotes the one-message history containing the statement.
\item Set
\[
h_0\leftarrow s\parallel y_s,
\qquad m\leftarrow 0.
\]
\end{itemize}
\end{samepage}

\item \textbf{Mandatory First Challenge--Response.}
\begin{itemize}
\compactprotocol
    \item The verifier samples or selects $x_1\sim\pi_V(\cdot\mid h_0,i)$, so $x_1\in C(h_0,i)$. The prover responds with
    \[
    y_1=Q(h_0\parallel x_1,\theta).
    \]
    \item Set
    \[
    h_1\leftarrow h_0\parallel x_1\parallel y_1,
    \qquad m\leftarrow 1.
    \]
\end{itemize}

\item \textbf{Deliberation Loop.} Repeat:
\begin{itemize}
\compactprotocol
    \item \textit{Evaluation.}
    \begin{itemize}
        \item $V$ executes the prescribed check and updates its local state, then computes $\omega=U(h_m,i)$ using that state. Here $h_m$ is the post-response message history after $m$ completed challenge--response rounds.
    \end{itemize}
    \item \textit{Termination.}
        \begin{itemize}
\compactprotocol
            \item If $\omega=\text{Accept}$: terminate with verdict $\Omega=\text{Accept}$.
            \item If $\omega=\text{Reject}$: terminate with verdict $\Omega=\text{Reject}$.
            \item If $\omega=\text{Challenge}$ and $m=T$: terminate with verdict $\Omega=\text{Reject}$ (review horizon reached).
        \end{itemize}
    \item \textit{Challenge--Response} (if $\omega=\text{Challenge}$ and $m<T$):
        \begin{itemize}
\compactprotocol
             \item The verifier samples or selects $x_{m+1}\sim\pi_V(\cdot\mid h_m,i)$, so $x_{m+1}\in C(h_m,i)$. The prover responds with
             \[
             y_{m+1}=Q(h_m\parallel x_{m+1},\theta).
             \]
             \item Set
             \[
             h_{m+1}\leftarrow h_m\parallel x_{m+1}\parallel y_{m+1},
             \qquad m\leftarrow m+1.
             \]
        \end{itemize}
\end{itemize}

\item \textbf{Output.} Let $m'$ be the final value of $m$ in an admissible execution. Since the first challenge is mandatory, $1\le m'\le T$. The protocol outputs $(d^{(m')},\Omega)$, comprising the terminal verdict $\Omega\in\{\text{Accept},\text{Reject}\}$ and the final message sequence
\[
d^{(m')}=h_{m'}=s\parallel y_s\parallel x_1\parallel y_1\parallel\cdots\parallel x_{m'}\parallel y_{m'}.
\]
The verdict is separate from $d^{(m')}$. Immediate rejection of an inadmissible message instead returns the admissible message prefix received so far and Reject, possibly before any completed round.
\end{enumerate}
\end{definition}
\endgroup

Let $\mathbb P^{s,P}$ denote the execution law of $\langle P,V_{i,T}\rangle(s)$ over its verdict and transcript, including prover, challenge, checker, and human-execution randomness. Here $P$ may be the implemented strategy $P_\theta$, the honest strategy $P^{\mathrm H}$, or an arbitrary adaptive strategy $P^*$; deterministic components induce a point mass.

Let $\mathbf H_m$ be the random history after $m\le T$ completed rounds and $h_m$ a realization. If a run stops at $m'<T$, pad it by setting $\mathbf H_m=\mathbf H_{m'}$ for $m'<m\le T$, so all certificate scores and maxima remain defined. Randomness enters through protocol execution; the truth value $v(s)$ remains fixed.

\begin{figure}[H]
\centering
\begin{tikzpicture}[
  font=\footnotesize,
  >=Latex,
  box/.style={
    draw=black!70,
    fill=black!4,
    rounded corners=2pt,
    text width=3.85cm,
    minimum height=1.5cm,
    inner xsep=4pt,
    inner ysep=5pt,
    align=center
  },
  decision/.style={
    draw=black!70,
    fill=black!2,
    diamond,
    aspect=1.7,
    inner sep=3pt,
    minimum width=2.4cm,
    minimum height=1.3cm,
    align=center
  },
  terminal/.style={
    draw=black!70,
    fill=black!6,
    rounded corners=8pt,
    text width=3.85cm,
    minimum height=1.35cm,
    inner xsep=4pt,
    inner ysep=5pt,
    align=center
  },
  arrow/.style={->, line width=0.6pt, draw=black!75, shorten <=2pt, shorten >=2pt},
  looparrow/.style={arrow, rounded corners=4pt},
  branchlabel/.style={font=\footnotesize, fill=white, inner sep=2pt, align=center}
]
\node[draw=black!70, ellipse, minimum width=2.8cm, minimum height=0.65cm, align=center]
  (statement) at (0,1.65) {Fixed statement $s$};
\node[box] (initial) at (0,0) {\textbf{Initial response}\\LLM prover $P$\\$y_s=Q(\langle s\rangle,\theta)$\\$h_0=s\parallel y_s$};
\node[box] (challenge) at (5.15,0) {\textbf{Challenge}\\Human verifier $V$\\$x_m\sim\pi_V(\cdot\mid h_{m-1},i)$};
\node[box] (response) at (10.3,0) {\textbf{Response}\\LLM prover $P$\\$y_m=Q(h_{m-1}\parallel x_m,\theta)$};
\node[box] (update) at (10.3,-2.55) {\textbf{Append round}\\Human verifier $V$\\$h_m\leftarrow h_{m-1}\parallel x_m\parallel y_m$};
\node[box] (evaluate) at (5.15,-2.55) {\textbf{Check and decide}\\Human verifier $V$\\Check; update local state\\$\omega\leftarrow U(h_m,i)$};
\node[decision] (route) at (0,-2.55) {\textbf{Route}\\$\omega$};
\node[terminal] (reject) at (0,-5) {\textbf{Reject}\\$\omega=\text{Reject}$ or\\$\omega=\text{Challenge},\ m=T$};
\node[terminal] (accept) at (5.15,-5) {\textbf{Accept}\\$\omega=\text{Accept}$};

\draw[arrow] (statement.south) -- (initial.north);
\draw[arrow] (initial.east) -- node[branchlabel, above, inner sep=1pt] {$m=1$} (challenge.west);
\draw[arrow] (challenge.east) -- (response.west);
\draw[arrow] (response.south) -- (update.north);
\draw[arrow] (update.west) -- (evaluate.east);
\draw[arrow] (evaluate.west) -- (route.east);
\draw[arrow] (route.south) -- (reject.north);
\draw[arrow] (route.south east) -- (accept.north);
\draw[looparrow] (route.west) -- (-2.45,-2.55) -- (-2.45,2.7)
  -- node[branchlabel, above] {$\omega=\text{Challenge},\ m<T$\\$m\leftarrow m+1$}
  (5.15,2.7) -- (challenge.north);
\end{tikzpicture}
\caption{Interactive Human-LLM Deliberation Protocol for admissible messages. The initial response is followed by a mandatory challenge. In each round, the verifier appends the response to obtain $h_m$, performs the prescribed check and local-state update, then evaluates $\omega=U(h_m,i)$. The branches show continuation and terminal decisions, including timeout at $T$. Inadmissible messages cause immediate Reject.}
\label{fig:protocol_diagram}
\end{figure}

The review horizon $T$ in Figure~\ref{fig:protocol_diagram} is the round budget used in the error bounds of Section~\ref{sec:zkp} and the fatigue analysis of Section~\ref{sec:fallibility}.

\subsection{Running Example: Reviewing a Conference Schedule}
\label{sec:running_example}

An LLM proposes a conference schedule and claims that it has no conflicts. Lee is listed as both a keynote speaker and a panelist. An organizer asks, ``Can Lee attend both sessions?'' The model must identify the relevant entries in the schedule. If Lee's keynote runs from 10:00 to 10:30 and the panel from 10:15 to 10:45, comparing the two entries exposes a conflict. The organizer can check this answer without constructing a replacement schedule or reviewing every session at once, and can ask follow-up questions within the budget of $T$ rounds.

Here ``no conflicts'' has a specific meaning: every session fits the recorded availability of its speakers and room, and no two overlapping sessions share a speaker or room. The schedule, speaker assignments, availability records, and comparison rules are fixed before review; back-to-back sessions are allowed. With these fixed records and rules, the claim can be decided by a finite set of time comparisons. Revising the schedule starts certification of a new claim.

Finding one conflict refutes the claim. Passing a few checks raises a different question: how much evidence do they supply about the entire schedule? The next subsection quantifies this evidence for uniform random checks under ideal execution.

\subsection{Diagnostic Checks: An Idealized Example}
\label{sec:diagnostic_example}

A diagnostic policy must be able to expose a false claim even when the prover chooses its answers adaptively. For the fixed conference schedule, the verifier constructs a complete list of $J>1$ constraints from the recorded data: each session's availability requirements and the non-overlap requirement for every pair sharing a speaker or room. The list covers exactly the claim in Section~\ref{sec:running_example}; it is fixed before the dialogue, rather than selected from the prover's answers. Let $s_{\mathrm{sched}}$ be the statement that all these constraints hold. The information state $i$ contains the schedule, source records, constraint list, and checking procedure.

Building the constraint list, checking it against the source records, and retrieving those records all count toward the effort in Section~\ref{sec:fallibility}. Any previously checked materials in $i$ must be equally available to both designs.

At each diagnostic round, the verifier samples one constraint uniformly with replacement, independently of the preceding history, and asks the LLM for the relevant records and time comparison. An ideal check passes only if the cited records match the fixed source and the selected constraint holds. If the sampled constraint is the non-overlap requirement for Lee's two sessions above, a correct check rejects regardless of how the LLM describes the overlap.

If $s_{\mathrm{sched}}$ is false, at least one of the $J$ constraints is violated. Each fresh draw therefore has probability at least $1/J$ of exposing a conflict under ideal execution. Writing $\mathcal F_{m-1}$ for the preceding observable history and $\mathsf{Pass}_m$ for the event that the check passes, every adaptive prover $P^*$ satisfies, at each reachable false-claim history,
\[
\Pr^{s_{\mathrm{sched}},P^*}(\mathsf{Pass}_m\mid\mathcal F_{m-1})
\le 1-\frac{1}{J}
=:q.
\]
In the certificate rule of Section~\ref{sec:zkp}, each passed ideal check multiplies the certificate value by $1/q$. Starting from $M_0=1$, an all-pass run of $k$ checks reaches the acceptance threshold when $M_k=q^{-k}\ge1/\delta$. Whether this happens within $T$ depends on the size of the constraint list and the chosen error target $\delta$.

For completeness, suppose the schedule has no conflicts and an honest LLM always supplies correct records and usable comparisons. With ideal check execution, every round passes, so the protocol accepts with probability one whenever $q^{-T}\ge1/\delta$. Otherwise, even this all-pass run times out without certification. Theorem~\ref{thm:conditional_completeness} gives a completeness bound when responses and human checks can fail, provided local reliability conditions hold and diagnostic evidence can reach the threshold within $T$.

The false-pass bound comes from sampling the complete list and checking against fixed records. Follow-up questions can direct attention to suspicious entries, but do not automatically inherit the same bound. The \hyperref[sec:anchored_review_example]{anchored construction} at the end of Section~\ref{sec:zkp} combines random checks with adaptive choices while retaining a justified bound.

\subsection{Correspondence with Classical Interactive Proofs}
\label{sec:ip_correspondence}

A standard IP instantiation requires a formal promise problem, a uniform polynomial-time verifier, and polynomial communication. Under their stated assumptions, Theorems~\ref{thm:anytime_soundness} and~\ref{thm:conditional_completeness} give soundness against every adaptive prover on false instances and completeness for an honest prover satisfying the response bounds on true instances; $a_m=c_m=1$ recovers ideal execution. Corollary~\ref{thm:conditional_ip} gives the full IP conditions.

To specify these resource bounds, define the input size for a statement $s$ and verifier information state $i$ as
\[
\kappa(s,i):=|s|+\operatorname{size}(i),
\]
where $\operatorname{size}(i)$ is the length of a finite encoding of $i$. In an IP instantiation, all admissible messages---the initial statement, challenges, responses, and terminal verdicts---have length at most a polynomial cap $L_{\max}(\kappa)$. Both the horizon $T$ and total verifier computation must also be polynomial in $\kappa$.

For scheduling, that computation includes selecting constraints, retrieving and comparing records, and updating the certificate. The bound $q=1-1/J$ can be justified in advance, but evaluating and using it during review still counts toward the budget. The effort and execution-reliability conditions in Section~\ref{sec:fallibility} address the additional demands of human review.

In the classical three-coloring protocol, the honest prover commits to a randomly relabeled coloring each round before the verifier samples an edge and checks the opened endpoint colors. On a non-three-colorable graph, every assignment violates an edge, giving uniform sampling a detection bound without requiring the verifier to locate an error first. The protocol can hide the full coloring, although a revealed coloring is also efficiently checkable \citep{Vadhan2007ComplexityZK}.

Appendix~\ref{app:hvzk} states the additional simulation and prefix-consistency assumptions under which stopping at the certificate threshold preserves HVZK. Section~\ref{sec:zkp} develops the certificate rule with response and human execution errors.

\section{Sequential Certification under Response and Execution Error}
\label{sec:zkp}

An LLM may supply inadequate evidence even for a true statement, and a human may execute a valid check incorrectly. In the scheduling review, the LLM can supply a correct time comparison that the organizer then misreads. We distinguish the response, its checking, and the observed outcome.

\paragraph{Events and probability bounds.}
At a reached diagnostic round $m$, the event $\mathcal A_m$ means that the response is admissible, truthful, checkable, and sufficient to pass a correctly executed check. The event $\mathcal E_m$ means that the human executes the check correctly. The observed event $\mathsf{Pass}_m$ means that the implemented check returns Pass, as used by $U$. The bounds $a_m$ and $c_m$ are lower bounds on the conditional probabilities of $\mathcal A_m$ and $\mathcal E_m$, respectively; the conditioning histories are specified below.

\paragraph{Information within a round.}
Before choosing the next challenge, $\mathcal F_{m-1}$ records the transcript, verifier-observable state, and random bits already used through round $m-1$. The verifier uses only this information to choose the question type, challenge policy, and ideal false-pass bound $q_m$. Only diagnostic rounds update the certificate; clarification rounds can guide later challenges.

After the response $y_m$ to challenge $x_m$ arrives, $\mathcal G_m$ adds both messages and all other observable pre-check state to $\mathcal F_{m-1}$. We assume $\mathcal A_m\in\mathcal G_m$: adequacy is determined by this data, although the verifier may lack an efficient way to decide it. After the check, $\mathcal F_m$ also records the observed outcome and updated verifier state.

\begin{samepage}
For a fixed history $h_{m-1}$ and question--response pair $(x_m,y_m)$, $r_m^\circ$ is the prescribed checker's probability of returning Pass when run correctly, including any intended checker randomness. The bound $q_m$ is chosen earlier: on false statements, it upper-bounds this probability averaged over the challenge policy and adaptive response.
\par
\end{samepage}

\paragraph{Histories used in the proof.}
The analysis also tracks whether earlier responses were adequate and checks were executed correctly; execution errors may be unobserved. After round $m$, this augmented history is
\begin{equation}
\mathcal J_m
:=
\sigma\!\left(\mathcal F_m,
\mathcal A_1,\mathcal E_1,\ldots,
\mathcal A_m,\mathcal E_m\right),
\label{eq:augmented_analysis_filtration}
\end{equation}
For the check in round $m$, combine the augmented history through round $m-1$ with the current question and response:
\begin{equation}
\mathcal K_m:=\mathcal J_{m-1}\vee\mathcal G_m.
\label{eq:full_precheck_field}
\end{equation}
Here $\sigma(\cdot)$ denotes the information generated by its arguments, and $\vee$ combines two information fields. The four histories serve the following roles:
\begin{center}
\begin{tabular}{@{}lcc@{}}
\hline
Stage in round $m$ & Observable history & Analysis history \\
\hline
Before choosing the challenge & $\mathcal F_{m-1}$ & $\mathcal J_{m-1}$ \\
After the response, before checking & $\mathcal G_m$ & $\mathcal K_m$ \\
\hline
\end{tabular}
\end{center}
Challenge selection and the choice of $q_m$ use the observable history $\mathcal F_{m-1}$ available before the round. We require the reliability bounds to hold conditional on the relevant augmented histories, including earlier response and execution outcomes. These conditional requirements allow dependence across rounds.

\begin{condition}[Response reliability of the honest prover]
\label{condition:response_reliable_prover}
Fix a true statement $s\in L$, information state $i\in\mathcal I$, and horizon $T$. Fix numerical lower bounds $a_m\in[0,1]$ before execution. A possibly stochastic honest prover $P^{\mathrm H}$ is \emph{response-reliable} if its initial response is admissible almost surely and the following bound holds after every covered history at each reached diagnostic round:
\[
\Pr^{s,P^{\mathrm H}}(\mathcal A_m\mid\mathcal J_{m-1})\ge a_m.
\]
The inequality must hold after every covered past history, including earlier response and execution outcomes; the probability averages over the current challenge and prover randomness. On clarification rounds, $\mathcal A_m$ occurs surely by convention; the fixed sequence $a_m$ is unchanged. The honest prover's clarification responses are admissible almost surely, while diagnostic responses may be inadequate with positive probability. The initial and clarification responses therefore cannot cause rejection for malformed or over-length messages.
\end{condition}

\subsection{Effective Verifier and Certificate Rule}
\label{sec:effective_verifier}

We combine the diagnostic bound $q_m$ with the execution bound $c_m$ to obtain the implemented false-pass bound and specify how passed checks accumulate certification evidence. The conditions below apply to a fixed task and verifier information state, with prescribed challenge and checking policies executable within the verifier's resources. Comparisons between interfaces require separate justification of these conditions for each implemented policy and its actual prover responses; an effort--load model alone does not establish them.

\begin{assumption}[Effective Verifier]
\label{assumption:effective_verifier_confidence}
Fix target soundness $\delta\in(0,1)$ and horizon $T$. The execution lower-bound sequence $(c_m)_{m=1}^{T}$ is deterministic and fixed before execution. At each reached diagnostic round $m\le T$, the bound $c_m\in[0,1]$ must hold after every covered history. The verifier also uses an $\mathcal F_{m-1}$-measurable diagnostic bound $q_m\in[0,1]$. Define
\begin{equation}
\bar q_m:=1-c_m(1-q_m),
\qquad 0<\bar q_m<1.
\label{eq:implemented_false_pass}
\end{equation}
The following conditions hold.
\begin{enumerate}
\item \textbf{Human execution reliability.} Under every execution law covered by the results below (every false statement with every adaptive prover, and the true statement with the honest prover covered by the completeness bound), every reachable full pre-check history satisfies
\[
\Pr(\mathcal E_m\mid\mathcal K_m)\ge c_m,
\qquad
\Pr(\mathcal E_m\cap\mathsf{Pass}_m\mid\mathcal K_m)
=\Pr(\mathcal E_m\mid\mathcal K_m)r_m^\circ.
\]
By iterated expectation, the same lower bound holds conditional on $\mathcal G_m$ or $\mathcal F_{m-1}$.

\item \textbf{False-statement validity under correct execution.} For every false statement $s\notin L$, every adaptive admissible prover $P^*$, and every reachable active history,
\[
\mathbb E^{s,P^*}[r_m^\circ\mid\mathcal F_{m-1}]
\le q_m.
\]
This probability assumes that the prescribed check is executed correctly. Given the preceding observable history, it averages over the current challenge, the adaptive prover response, and the checker's intended randomness. The bound applies to the challenge policy as a whole; it need not hold separately for every realized question. Challenges may be nonuniform and history-adaptive, provided the conditional bound remains valid at every reachable active history.

\item \textbf{True-statement correctness.} For every $s\in L$ and response-reliable honest prover $P^{\mathrm H}$ satisfying Condition~\ref{condition:response_reliable_prover},
\[
\mathcal A_m\cap\mathcal E_m\Longrightarrow\mathsf{Pass}_m
\]
on every reached diagnostic round.

\item \textbf{Certification process and verdict.} Set $M_0:=1$ before receiving any prover message. On a clarification round, set $q_m=\bar q_m=r_m^\circ=1$ and let $\mathcal A_m$, $\mathcal E_m$, and $\mathsf{Pass}_m$ occur surely by convention. Keep the deterministic lower bounds $a_m,c_m$ unchanged and leave $M_m=M_{m-1}$. On an active diagnostic round, update
\begin{equation}
M_m
=M_{m-1}
\frac{\mathbf1_{\{\mathsf{Pass}_m\}}}{\bar q_m}.
\label{eq:certificate_update}
\end{equation}
A failed diagnostic check therefore sets $M_m=0$ and causes immediate Reject. For every reached post-challenge history,
\[
U(h_m,i)=
\begin{cases}
\mathrm{Reject}, & \text{the diagnostic check fails},\\
\mathrm{Accept}, & \text{the check does not fail and }M_m\ge1/\delta,\\
\mathrm{Challenge}, & \text{otherwise}.
\end{cases}
\]
If the last case occurs at $m=T$, the protocol outputs Reject. After termination, $M_m$ is held constant under the padding convention.

\item \textbf{Reaching the threshold on an honest run.} There exists a deterministic $K\le T$ such that, on every honest execution in which the first $K$ reached responses are adequate and their checks are correctly executed, either the protocol has accepted earlier or
\begin{equation}
\sum_{m=1}^{K}-\log\bar q_m\ge\log(1/\delta).
\label{eq:diagnostic_progress}
\end{equation}
\end{enumerate}
\end{assumption}

A passed diagnostic check multiplies $M_m$ by $1/\bar q_m$, so its contribution to the accumulated log evidence is
\[
g_m:=-\log\bar q_m.
\]
For fixed $q_m$, a larger execution-reliability bound $c_m$ lowers $\bar q_m$ and increases this contribution. The process $M_m$ measures evidence against a false claim; it is not a posterior probability that the claim is true.

Clarification leaves $M_m$ unchanged, although it may improve understanding or guide later challenges. In the scheduling review, asking the LLM to explain an entry can help the organizer decide what to inspect next. Checking the selected constraint against the fixed records supplies the diagnostic outcome: a pass earns $g_m$ units of log evidence under the stated bounds, while a failure causes rejection.

The two guarantees use these conditions differently. Soundness combines execution reliability, false-statement validity, and the certificate rule (items 1, 2, and 4), without requiring adequate responses from the prover. Completeness also requires adequate honest responses under Condition~\ref{condition:response_reliable_prover}, true-statement correctness, and enough diagnostic progress to reach the threshold. Thus $c_m$ enters both guarantees, while $a_m$ enters the completeness bound through the chance of obtaining adequate evidence for a true claim.

\subsection{Anytime-Valid Soundness}

The following result applies the standard test-supermartingale argument to the implemented false-pass bounds.

\begin{theorem}[Anytime-valid soundness]
\label{thm:anytime_soundness}
Suppose items 1, 2, and 4 of Assumption~\ref{assumption:effective_verifier_confidence} hold, with the bounds specified there. For every $s\notin L$ and every adaptive admissible prover strategy $P^*$,
\begin{equation}
\Pr^{s,P^*}\!\left(\exists m\le T:M_m\ge1/\delta\right)
\le\delta.
\label{eq:anytime_soundness}
\end{equation}
Consequently,
\[
\sup_{P^*}\Pr^{s,P^*}(\Omega=\mathrm{Accept})\le\delta.
\]
The bound holds uniformly over all prefixes and all stopping times adapted to the execution history, provided formal acceptance still requires $M_m\ge1/\delta$.
\end{theorem}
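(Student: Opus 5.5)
We show that $(M_m)_{m=0}^{T}$ is a nonnegative supermartingale with respect to the observable filtration $(\mathcal F_m)$, under $\mathbb P^{s,P^*}$ for any false $s$ and any adaptive admissible $P^*$, with $M_0=1$. Ville's inequality (here a finite-horizon maximal inequality) then gives \eqref{eq:anytime_soundness}. Measurability is immediate: $M_m$ is built from $\mathsf{Pass}_1,\dots,\mathsf{Pass}_m$ and the $\mathcal F_{k-1}$-measurable bounds $q_k$, and $c_k$ is deterministic, so $\bar q_k$ is $\mathcal F_{k-1}$-measurable and $M_m$ is $\mathcal F_m$-measurable. Nonnegativity holds because $0<\bar q_m<1$.

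The core step is the one-round bound
\[
\Pr^{s,P^*}(\mathsf{Pass}_m\mid\mathcal F_{m-1})\le\bar q_m
\]
on every reachable active diagnostic history. We first work conditionally on the full pre-check field $\mathcal K_m$ and split $\mathsf{Pass}_m$ according to $\mathcal E_m$. By item~1, $\Pr(\mathcal E_m\cap\mathsf{Pass}_m\mid\mathcal K_m)=e_m r_m^\circ$ with $e_m:=\Pr(\mathcal E_m\mid\mathcal K_m)\ge c_m$. Bounding the incorrect-execution contribution by $\Pr(\mathcal E_m^c\mid\mathcal K_m)=1-e_m$ gives
\[
\Pr(\mathsf{Pass}_m\mid\mathcal K_m)\le e_m r_m^\circ+1-e_m=1-e_m(1-r_m^\circ)\le 1-c_m(1-r_m^\circ).
\]
The last inequality uses $r_m^\circ\le1$ and $e_m\ge c_m$. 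The right-hand side is affine in $r_m^\circ$ with deterministic coefficients. Since $\mathcal F_{m-1}\subseteq\mathcal K_m$, we take $\mathbb E[\cdot\mid\mathcal F_{m-1}]$ by the tower property and apply item~2, $\mathbb E[r_m^\circ\mid\mathcal F_{m-1}]\le q_m$, to obtain $1-c_m(1-q_m)=\bar q_m$.

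Two points need care. First, $r_m^\circ$ must be $\mathcal K_m$-measurable; it is determined by $h_{m-1},x_m,y_m$, which lie in $\mathcal G_m\subseteq\mathcal K_m$. Second, item~2 is stated for correct execution, and we use only its averaged form over $\mathcal F_{m-1}$. This interchange is the step I expect to be the main obstacle. The issue is that the adaptive prover's response law conditioned on $\mathcal F_{m-1}$ must be the same object that item~2 controls, so unobserved earlier execution errors must not break the tower argument. Since $\mathcal F_{m-1}\subseteq\mathcal J_{m-1}\subseteq\mathcal K_m$, the tower property applies directly, and no conditioning on the unobserved $\mathcal E_k$ is needed at the $\mathcal F$ level.

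On clarification rounds, and after termination under the padding convention, $M_m=M_{m-1}$ trivially. On an active diagnostic round,
\[
\mathbb E[M_m\mid\mathcal F_{m-1}]=M_{m-1}\Pr(\mathsf{Pass}_m\mid\mathcal F_{m-1})/\bar q_m\le M_{m-1},
\]
because $M_{m-1}$ and $\bar q_m$ are $\mathcal F_{m-1}$-measurable. A malformed message causes Reject, freezing $M$ without crediting evidence. Hence $(M_m)$ is a test supermartingale, and Ville's inequality (optional stopping at $\tau=\min\{m:M_m\ge1/\delta\}\wedge T$ suffices) gives $\Pr(\max_{m\le T}M_m\ge1/\delta)\le\delta\,\mathbb E[M_0]=\delta$.

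Acceptance requires $M_m\ge1/\delta$ at some $m\le T$, so $\{\Omega=\mathrm{Accept}\}$ is contained in that event, and taking the supremum over $P^*$ gives the corollary. The claim about arbitrary adapted stopping times follows because any such rule accepts only on the same maximal event.
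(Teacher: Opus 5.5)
Your proposal is correct and follows the paper's proof. You bound the one-round pass probability by $\bar q_m$ using the execution condition on $\mathcal K_m$ together with the averaged false-pass bound on $\mathcal F_{m-1}$ via the tower property, deduce that $(M_m)$ is a nonnegative supermartingale with $M_0=1$, and apply Ville's inequality to a crossing event that contains acceptance. The only differences are cosmetic: you upper-bound $\Pr(\mathsf{Pass}_m\mid\mathcal K_m)$ directly where the paper lower-bounds $\Pr(\mathcal E_m\cap\mathsf{Fail}_m\mid\mathcal F_{m-1})$, and you spell out the finite-horizon optional-stopping form of Ville's inequality instead of citing it.
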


\begin{proof}
Using the worst-case convention that an incorrectly executed check may pass,
\begin{align*}
\Pr(\mathsf{Fail}_m\mid\mathcal F_{m-1})
&\ge \Pr(\mathcal E_m\cap\mathsf{Fail}_m\mid\mathcal F_{m-1})\\
&=\mathbb E\!\left[
\Pr(\mathcal E_m\mid\mathcal K_m)(1-r_m^\circ)
\middle|\mathcal F_{m-1}\right]\\
&\ge c_m\mathbb E[1-r_m^\circ\mid\mathcal F_{m-1}]\\
&\ge c_m(1-q_m).
\end{align*}
Hence $\Pr(\mathsf{Pass}_m\mid\mathcal F_{m-1})\le1-c_m(1-q_m)=\bar q_m$.
Therefore
\[
\mathbb E[M_m\mid\mathcal F_{m-1}]
\le M_{m-1},
\]
so $(M_m)_{m=0}^T$ is a nonnegative supermartingale.
Ville's inequality \citep{shafer2011test,ramdas2023gametheoretic} now yields
\[
\Pr\!\left(\sup_{0\le m\le T}M_m\ge1/\delta\right)
\le\delta M_0=\delta.
\]
Formal acceptance is a subset of this crossing event, proving the result. The argument is pointwise in $P^*$, so the outer supremum is valid.
\end{proof}

One threshold, fixed before interaction, controls the probability of ever accepting a false statement across the full horizon.

The one-round bound $\bar q_m$ used in the proof is attained when correct execution has probability $c_m$ at every full pre-check history, the correct checker's conditional average pass probability equals $q_m$, and incorrect execution always passes.

\subsection{Finite-Horizon Completeness}

\begin{theorem}[Finite-horizon completeness]
\label{thm:conditional_completeness}
For a true statement $s\in L$, suppose Assumption~\ref{assumption:effective_verifier_confidence} and Condition~\ref{condition:response_reliable_prover} hold with a response-reliable honest prover $P^{\mathrm H}$ and a round bound $K\le T$ satisfying the progress condition. Then
\begin{equation}
\Pr^{s,P^{\mathrm H}}(\Omega=\mathrm{Accept})
\ge\prod_{m=1}^{K}a_mc_m.
\label{eq:completeness_product}
\end{equation}
In particular, if $\prod_{m=1}^{K}a_mc_m\ge1-\gamma$, then the protocol has completeness at least $1-\gamma$.
\end{theorem}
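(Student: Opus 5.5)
The plan is to lower-bound acceptance by the probability of a single good event,
\[
\mathcal G:=\bigcap_{m=1}^{K}\bigl(\mathcal A_m\cap\mathcal E_m\bigr),
\]
interpreted on reached rounds. If the run has already stopped before round $m$, the padding convention makes the event vacuous there, so we treat it as holding. First we show that $\mathcal G\subseteq\{\Omega=\mathrm{Accept}\}$, and then that $\Pr^{s,P^{\mathrm H}}(\mathcal G)\ge\prod_{m=1}^K a_mc_m$ by a chain-rule argument along the augmented filtration $\mathcal J_m$.

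\emph{Step 1 (the good event forces acceptance).} Condition~\ref{condition:response_reliable_prover} makes the initial and clarification responses admissible almost surely, so there is no early malformed-message rejection. On a diagnostic round $m\le K$ where $\mathcal A_m\cap\mathcal E_m$ holds, item~3 of Assumption~\ref{assumption:effective_verifier_confidence} gives $\mathsf{Pass}_m$. The response is admissible because adequacy includes admissibility. By item~4 the check does not fail, and $M_m=M_{m-1}/\bar q_m$. On clarification rounds, $M_m=M_{m-1}$, and by the convention $\bar q_m=1$ these rounds contribute zero to the sum. Hence $U$ never returns Reject on $\mathcal G$ during the first $K$ rounds. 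Either the protocol accepts before round $K$, or $\log M_K=\sum_{m\le K}-\log\bar q_m\ge\log(1/\delta)$ by item~5. In the latter case $M_K\ge1/\delta$, so $U$ returns Accept at round $K\le T$, before the horizon-$T$ timeout applies.

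\emph{Step 2 (product bound).} Let $B_m:=\bigcap_{j\le m}(\mathcal A_j\cap\mathcal E_j)$, which is $\mathcal J_m$-measurable. It suffices to show
\[
\Pr(B_m)\ge a_mc_m\Pr(B_{m-1}),
\]
which follows from $\Pr(\mathcal A_m\cap\mathcal E_m\mid\mathcal J_{m-1})\ge a_mc_m$ on $B_{m-1}$. To prove that inequality, condition first on $\mathcal K_m=\mathcal J_{m-1}\vee\mathcal G_m$. Since $\mathcal A_m\in\mathcal G_m\subseteq\mathcal K_m$,
\[
\Pr(\mathcal A_m\cap\mathcal E_m\mid\mathcal K_m)=\mathbf 1_{\mathcal A_m}\Pr(\mathcal E_m\mid\mathcal K_m)\ge c_m\mathbf 1_{\mathcal A_m},
\]
using item~1. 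Taking the conditional expectation given $\mathcal J_{m-1}$ and applying Condition~\ref{condition:response_reliable_prover} gives the lower bound $c_m\Pr(\mathcal A_m\mid\mathcal J_{m-1})\ge a_mc_m$. On histories where the run has already terminated, or on clarification rounds, the events hold surely by convention, and the factor $a_mc_m\le1$ only weakens the bound. Iterating from $B_0$ (the sure event) gives the product.

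\emph{Main obstacle.} The delicate point is Step~2's handling of measurability and of "reached" rounds. We must make sure that $\mathcal A_m$ is $\mathcal K_m$-measurable, so that it can be pulled out of the conditional probability. We must also make sure that the conditional bounds in Condition~\ref{condition:response_reliable_prover} and item~1 are invoked only on covered, reachable histories. The plan is to define $\mathcal A_m,\mathcal E_m$ to occur surely on unreached rounds, via the padding convention. The tower property can then be applied uniformly for all $m\le K$ without case splits. The final claim follows immediately: if $\prod_{m\le K}a_mc_m\ge1-\gamma$, then acceptance has probability at least $1-\gamma$.
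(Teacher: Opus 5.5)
Your proposal is correct and follows essentially the same route as the paper. Both arguments have two parts. The first is the chain-rule bound $\Pr(\mathsf H_k)\ge a_kc_k\Pr(\mathsf H_{k-1})$, obtained by conditioning first on $\mathcal K_k$ and then on $\mathcal J_{k-1}$. The second is the inclusion of the all-good event in $\{\Omega=\mathrm{Accept}\}$, which follows from item~3 and the progress condition.

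The only difference is cosmetic: the paper declares unused events false after rejection, whereas you declare them true. This gives the same bound, because a rejection at a reached round already falsifies $\mathcal A_j\cap\mathcal E_j$ there, apart from the null set of inadmissible initial or clarification responses that Condition~\ref{condition:response_reliable_prover} excludes.
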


\begin{proof}
Let $\mathsf H_0$ denote the whole sample space and $\mathsf H_k:=\bigcap_{m=1}^{k}(\mathcal A_m\cap\mathcal E_m)$. After early acceptance, declare all unused adequacy and execution events true; after rejection, declare them false. Extend the information fields by the corresponding terminal records, so these events remain measurable. For $k\le K$, on $\mathsf H_{k-1}$ the protocol is either active or has already accepted: adequate responses and correctly executed checks prevent diagnostic rejection, and timeout cannot precede $k$. On the active part of $\mathsf H_{k-1}$, $\mathcal A_k$ is $\mathcal K_k$-measurable, so the response and execution bounds give
\[
\Pr(\mathcal A_k\cap\mathcal E_k\mid\mathcal J_{k-1})
=\mathbb E\!\left[
\mathbf1_{\mathcal A_k}\Pr(\mathcal E_k\mid\mathcal K_k)
\mid\mathcal J_{k-1}\right]
\ge c_k\Pr(\mathcal A_k\mid\mathcal J_{k-1})
\ge a_kc_k.
\]
On the already-accepted part of $\mathsf H_{k-1}$, the conditional probability is one by convention, so the same lower bound holds. No reliability condition is needed on previously rejected paths. Because $\mathsf H_{k-1}\in\mathcal J_{k-1}$, the conditional chain rule yields, without independence,
\[
\begin{aligned}
\Pr(\mathsf H_k)
&=\mathbb E\!\left[
\mathbf1_{\mathsf H_{k-1}}
\Pr(\mathcal A_k\cap\mathcal E_k\mid\mathcal J_{k-1})
\right]\\
&\ge a_kc_k\Pr(\mathsf H_{k-1}),
\end{aligned}
\]
and hence
\[
\Pr(\mathsf H_K)\ge\prod_{m=1}^{K}a_mc_m.
\]

On $\mathsf H_K$, response adequacy and correct execution prevent a diagnostic failure. By the progress condition, the protocol either accepts before $K$ or accumulates
\[
\log M_K=\sum_{m=1}^{K}-\log\bar q_m
\ge\log(1/\delta).
\]
Thus $M_K\ge1/\delta$, and the protocol accepts no later than round $K$. Therefore $\mathsf H_K\subseteq\{\Omega=\mathrm{Accept}\}$, which proves~\eqref{eq:completeness_product}.
\end{proof}

Timeout Reject is included in the failure probability. On $\mathsf H_K$, the threshold is reached by $K\le T$; outside $\mathsf H_K$, the proof conservatively allows an inadequate LLM response or a human execution error to cause false rejection, insufficient gain, or timeout. No independence between the two agents' errors or between rounds is assumed in either theorem.

\paragraph{Ideal execution as a special case.}
Under the same local validity and progress conditions, setting $a_m=c_m=1$ for every $m\le T$ makes response adequacy and correct execution hold almost surely at all reached rounds. Then $\bar q_m=q_m$, each passed diagnostic check multiplies $M_m$ by $1/q_m$, and $\prod_{m=1}^{K}a_mc_m=1$. Theorems~\ref{thm:anytime_soundness} and~\ref{thm:conditional_completeness} therefore give soundness at level $\delta$ and acceptance by $K\le T$ with probability one for the true statement and honest prover covered by the bounds.

\begin{corollary}[Conditional IP instantiation]
\label{thm:conditional_ip}
Write $i=(i_{\mathrm{pub}},i_{\mathrm{priv}})$ and let $x=\langle s,i_{\mathrm{pub}}\rangle$ define a promise problem $(\mathcal Y,\mathcal N)$, with verifier-only state fixed or uniformly generated by $V$. Suppose $V(1^\kappa,x)$ is uniform probabilistic polynomial time, while $T(\kappa)$, communication, and parameter representations are polynomially bounded. If the preceding hypotheses hold uniformly and, for some polynomial $p$,
\[
\prod_{m=1}^{K}a_mc_m-\delta\ge\frac{1}{p(\kappa)},
\]
then the instantiation is an interactive proof with completeness at least $\prod_{m=1}^{K}a_mc_m$ and soundness at most $\delta$ against every unbounded admissible prover; polynomial-time prover soundness instead gives an interactive argument. Standard amplification yields the conventional $2/3$--$1/3$ parameters.
\end{corollary}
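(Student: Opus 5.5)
The argument checks the three defining clauses of an interactive proof for the promise problem $(\mathcal Y,\mathcal N)$ in turn: efficiency, completeness, and soundness. After that it checks the gap condition that licenses amplification. In each clause the earlier theorems are used pointwise in $x$, and the uniformity hypothesis guarantees that the parameters $a_m,c_m,q_m,K,\delta$ are the same across instances of a given size.

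\textbf{Efficiency.} For each $x=\langle s,i_{\mathrm{pub}}\rangle$ with $\kappa=|x|$, the verifier's computation, including its private state, is uniform probabilistic polynomial time by hypothesis. This covers generating $i_{\mathrm{priv}}$, running $\pi_V$, executing checks, and updating $M_m$. The horizon $T(\kappa)$ and all message lengths are polynomial, so the whole interaction has polynomially many rounds and polynomial communication. One small point needs attention: the certificate update $M_m=M_{m-1}\mathbf 1_{\{\mathsf{Pass}_m\}}/\bar q_m$ and the comparison with $1/\delta$ must be computable in polynomial time. The polynomially bounded parameter representations give this, for example by working with $\log M_m$ as a sum of at most $T$ polynomial-size rationals.

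\textbf{Completeness and soundness.} For $x\in\mathcal Y$, i.e.\ $s\in L$, Theorem~\ref{thm:conditional_completeness} applied with the honest prover gives acceptance probability at least $\prod_{m=1}^K a_mc_m$. For $x\in\mathcal N$, Theorem~\ref{thm:anytime_soundness} bounds acceptance by $\delta$ for every adaptive admissible $P^*$. That theorem places no computational restriction on $P^*$, so the bound holds against unbounded provers. Here I must note that soundness quantifies only over admissible provers; inadmissible messages cause immediate Reject, which only lowers the acceptance probability. If item 2 of Assumption~\ref{assumption:effective_verifier_confidence} is justified only against polynomial-time provers, then the same supermartingale argument, restricted to that class, yields soundness against efficient provers. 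That is the definition of an interactive argument.

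\textbf{Amplification, the main obstacle.} The gap $\prod a_mc_m-\delta\ge 1/p(\kappa)$ is an inverse-polynomial gap between the completeness and soundness thresholds. I would first try sequential repetition: run $N=O(p(\kappa)^2)$ independent executions, each with fresh verifier randomness, and accept if the fraction of accepting runs exceeds the midpoint $(\prod a_mc_m+\delta)/2$. The difficulty is that the verifier's acceptance events across repetitions need not be independent against an adaptive cheating prover. So I would not use a plain Chernoff bound. Instead I would argue that, conditional on any history of previous repetitions, each new repetition satisfies the single-run bounds, because the hypotheses hold uniformly after every covered history. Azuma's inequality, or a martingale Chernoff bound, then controls the count of accepting runs. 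The resulting error probabilities can be pushed below $1/3$ while keeping everything polynomial, which gives the conventional $2/3$--$1/3$ parameters. For arguments, sequential rather than parallel repetition keeps this martingale reasoning valid.
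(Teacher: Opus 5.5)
Your proposal is correct and follows the paper's route: the corollary is Theorem~\ref{thm:conditional_completeness} and Theorem~\ref{thm:anytime_soundness} applied pointwise in $x$ under the stated resource hypotheses, and the inverse-polynomial gap licenses amplification, which you make explicit (sequential repetition with a martingale/Azuma bound) where the paper only cites ``standard amplification.'' Two small fixes: decide acceptance by comparing the exact rational product $\prod_{j\le m}\bar q_j$ with $\delta$, since the terms $\log\bar q_j$ are not rational; and state that each repetition uses fresh verifier and fresh honest-prover randomness, because the paper's conditional bounds cover only within-run histories and this independence is what gives your per-repetition conditional bounds.
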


An actual human requires separate formalization as a uniform probabilistic polynomial-time verifier.

\begin{samepage}
\paragraph{An anchored challenge construction.}
\phantomsection
\label{sec:anchored_review_example}
The soundness theorem requires a justified false-pass bound but does not prescribe a challenge policy. One way to retain such a bound is to choose, before round $m$, a probability $\lambda_m^{\mathrm{anc}}$ of using a reference check. Under correct execution, this check must detect a false claim with probability at least $\rho_m$ against every admissible adaptive prover at every covered history. The remaining diagnostic challenges may follow the dialogue. Even if those challenges contribute no guaranteed detection, the full mixture supports
\[
q_m=1-\lambda_m^{\mathrm{anc}}\rho_m,
\qquad
\bar q_m=1-c_m\lambda_m^{\mathrm{anc}}\rho_m.
\]
The mixing probability is chosen from the preceding history before the round; these bounds apply to the whole randomized policy. In the scheduling example, fresh uniform sampling from the fixed complete list of $J$ constraints supplies $\rho_m=1/J$. The organizer can therefore pursue apparent conflicts while retaining this random-check component. Appendix~\ref{app:anchored_validity} gives the formal conditions and proof. This construction supplies a local diagnostic bound; completeness still requires the response, execution, and progress conditions above.
\par
\end{samepage}

Together, the two certification theorems require enough diagnostic evidence to control false acceptance and enough response and execution reliability to preserve true acceptance before timeout. Section~\ref{sec:fallibility} turns these joint requirements into a set of review lengths that meet both targets when chosen before execution, then examines how effort, cognitive load, and expertise affect execution reliability and certification.

\section{Certification under Fixed Bounds}
\label{sec:fallibility}

For a fixed sequence of bounds, more passed checks tighten the false-acceptance bound but can weaken the true-acceptance guarantee by adding opportunities for inadequate responses and execution errors. We seek review lengths that certify both targets within the verifier's horizon.

Fix the rules for choosing challenges and checking responses over $T$ rounds, and denote this process by $\mathsf K_T$. We specify when it stops separately. Here the numerical bounds are fixed before execution. Challenges may still be sampled randomly or selected from earlier dialogue, and responses may be stochastic.

The primary protocol $\Pi^{\mathrm{ES}}$ rejects at the first failed check and accepts when the evidence first reaches the threshold. Each protocol $\Pi^{(n)}$ fixes its review length $n\le T$ in advance, rejects on any failed check, and accepts only after all $n$ checks pass. These protocols compare different review lengths using the same challenge and checking rules. In the scheduling example, the timetable and constraint list stay fixed as the review length varies.

Assume that Condition~\ref{condition:response_reliable_prover} and the local execution, validity, and correctness conditions in items 1--3 of Assumption~\ref{assumption:effective_verifier_confidence} hold for $\mathsf K_T$ with deterministic bounds $(a_i,c_i)_{i=1}^T$ and a uniform ideal false-pass bound $q\in[0,1)$. These bounds must hold uniformly on every prefix reachable under each protocol being evaluated. We use the stopping rules specified above and determine below whether the threshold can be reached within $T$. Every scheduled round in this specialization is diagnostic; clarification can instead be represented by a round-specific bound equal to one, but is omitted here. For round $i$, define
\begin{equation}
\bar q_i:=1-c_i(1-q),\qquad 0<\bar q_i<1,
\label{eq:false_survival_round}
\end{equation}
where $a_i$ lower-bounds adequate honest-prover response on a true statement and $c_i$ lower-bounds correct human execution. Define the false-acceptance upper bound and true-acceptance lower bound
\begin{equation}
R_n:=\prod_{i=1}^{n}\bar q_i,
\qquad
C_n:=\prod_{i=1}^{n}a_ic_i.
\label{eq:cumulative_burdens}
\end{equation}
For the fixed-length protocol $\Pi^{(n)}$, $R_n$ upper-bounds false acceptance and $C_n$ lower-bounds true acceptance under the stated conditions. For early stopping, we evaluate these bounds at the first round that reaches the acceptance threshold when every check has passed,
\begin{equation}
K_\delta:=\min\{n\le T:R_n\le\delta\},
\label{eq:first_certification_round}
\end{equation}
when the set is nonempty. If no such round exists, acceptance is impossible and the primary protocol rejects by $T$.

\begin{definition}[\texorpdfstring{$(\delta,\gamma)$}{(delta,gamma)}-certification]
\label{def:fallibility_validity}
For target errors $\delta,\gamma\in(0,1)$, a protocol $\Pi$ with a prespecified stopping rule is soundness-certified at level $\delta$ if, for every false statement $s^-\notin L$ and every adaptive admissible prover $P^*$,
\[
\Pr_{\Pi}^{s^-,P^*}(\Omega=\mathrm{Accept})\le\delta.
\]
For a fixed true statement $s^+\in L$ and honest prover $P^{\mathrm H}$ to which the stated lower bounds apply, it is completeness-certified at level $1-\gamma$ if
\[
\Pr_{\Pi}^{s^+,P^{\mathrm H}}(\Omega=\mathrm{Reject})\le\gamma.
\]
It is $(\delta,\gamma)$-certified for that true statement and honest prover if both conditions hold. The rejection probability includes detected failures and timeout rejection.
\end{definition}

\begin{proposition}[Deterministic early-stopping certificate]
\label{prop:fallible_execution_bounds}
If $K_\delta$ is defined, then for every $s^-\notin L$, every adaptive admissible prover $P^*$, and every true statement and honest prover $(s^+,P^{\mathrm H})$ covered by the bounds,
\begin{align}
\Pr_{\Pi^{\mathrm{ES}}}^{s^-,P^*}(\Omega=\mathrm{Accept})
&\le R_{K_\delta}\le\delta,
\label{eq:false_accept_prod}\\
\Pr_{\Pi^{\mathrm{ES}}}^{s^+,P^{\mathrm H}}(\Omega=\mathrm{Reject})
&\le1-C_{K_\delta}.
\label{eq:false_reject_ub}
\end{align}
Consequently, $\Pi^{\mathrm{ES}}$ is $(\delta,\gamma)$-certified whenever
\begin{equation}
C_{K_\delta}\ge1-\gamma.
\label{eq:completeness_condition}
\end{equation}
\end{proposition}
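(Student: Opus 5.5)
Since $q$ and the $c_i$ are deterministic, the bounds $\bar q_i$ are deterministic. The protocol $\Pi^{\mathrm{ES}}$ updates $M_n=\prod_{i\le n}\mathbf 1_{\{\mathsf{Pass}_i\}}/\bar q_i$, so on any run $M_n$ equals either $0$ or $1/R_n$. Because $R_n$ is nonincreasing in $n$ (each $\bar q_i<1$), the event $\{M_n\ge 1/\delta\}$ coincides with the all-pass event $\{\mathsf{Pass}_1\cap\cdots\cap\mathsf{Pass}_n\}$ when $n\ge K_\delta$, and is empty when $n<K_\delta$. Hence $\Pi^{\mathrm{ES}}$ accepts exactly on the event that the first $K_\delta$ checks all pass, and it accepts at round $K_\delta$. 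I would verify this reduction first, because both bounds follow from it.

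For \eqref{eq:false_accept_prod}, I fix $s^-\notin L$ and an adaptive $P^*$. I would reuse the one-round computation from the proof of Theorem~\ref{thm:anytime_soundness}: items 1 and 2 of Assumption~\ref{assumption:effective_verifier_confidence} with $q_i=q$ give $\Pr(\mathsf{Pass}_i\mid\mathcal F_{i-1})\le\bar q_i$ at every reachable active history. The next step is to chain these bounds. Since $\mathbf 1_{\{\mathsf{Pass}_1\cap\cdots\cap\mathsf{Pass}_{i-1}\}}$ is $\mathcal F_{i-1}$-measurable and the run is active on that event, iterated expectation gives
\[
\Pr(\textstyle\bigcap_{j\le i}\mathsf{Pass}_j)\le\bar q_i\Pr(\textstyle\bigcap_{j<i}\mathsf{Pass}_j).
\]
Inducting up to $K_\delta$ yields $\Pr(\mathrm{Accept})\le R_{K_\delta}$, and $R_{K_\delta}\le\delta$ holds by the definition of $K_\delta$. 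Alternatively, Theorem~\ref{thm:anytime_soundness} already gives the weaker bound $\delta$ directly. The sharper $R_{K_\delta}$ comes from the product chain, and it needs no independence.

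For \eqref{eq:false_reject_ub}, I would apply Theorem~\ref{thm:conditional_completeness} with $K=K_\delta$. Item 5 (the progress condition) holds automatically: on an honest run where the first $K_\delta$ responses are adequate and correctly checked, item 3 forces every check to pass, so $\sum_{m\le K_\delta}-\log\bar q_m=-\log R_{K_\delta}\ge\log(1/\delta)$. Item 4 is exactly the rule of $\Pi^{\mathrm{ES}}$. The theorem then gives $\Pr(\mathrm{Accept})\ge C_{K_\delta}$. Since the verdict is binary, this yields the rejection bound, and timeout is included because $K_\delta\le T$. The $(\delta,\gamma)$ conclusion follows immediately from \eqref{eq:completeness_condition} and Definition~\ref{def:fallibility_validity}.

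The main obstacle is bookkeeping rather than analysis. The conditional bounds must apply on every prefix reached under $\Pi^{\mathrm{ES}}$; this is assumed in the setup, since the bounds are required to hold uniformly on every prefix reachable under each protocol evaluated. One must also check that the terminal-record conventions used in the proof of Theorem~\ref{thm:conditional_completeness} are compatible with stopping at the deterministic round $K_\delta$.
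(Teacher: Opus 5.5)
Your proof is correct. The reduction to the all-pass event through $K_\delta$ and the false-acceptance chain match the paper's proof. The paper simply notes that acceptance requires survival through $K_\delta$ and applies the conditional chain rule with the per-round bound $\bar q_i$.

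The difference is in the completeness half. You invoke Theorem~\ref{thm:conditional_completeness} with $K=K_\delta$, after checking items 4 and 5 of Assumption~\ref{assumption:effective_verifier_confidence} for $\Pi^{\mathrm{ES}}$, so the chaining runs over the augmented events $\mathsf H_k$. The progress inequality is automatic: $\sum_{m\le K_\delta}-\log\bar q_m=-\log R_{K_\delta}\ge\log(1/\delta)$ holds deterministically, so item 3 is needed only inside the theorem to rule out diagnostic failure.

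The paper instead stays in the observable filtration. It derives $\Pr(\mathsf{Pass}_i\mid\mathcal F_{i-1})\ge\Pr(\mathcal A_i\cap\mathcal E_i\mid\mathcal F_{i-1})\ge a_ic_i$ by conditioning on $\mathcal J_{i-1}$ and $\mathcal K_i$ and projecting back to $\mathcal F_{i-1}$. It then chains over the observable survival events $\mathsf S_i=\bigcap_{j\le i}\mathsf{Pass}_j$, mirroring the soundness chain.

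Your route is more economical and makes explicit that Theorem~\ref{thm:conditional_completeness} already covers this case. Its cost is checking that the theorem's certificate rule and terminal-record conventions fit $\Pi^{\mathrm{ES}}$; they do, since no acceptance can occur before $K_\delta$.

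The paper's route needs only items 1--3, Condition~\ref{condition:response_reliable_prover}, and the fact that all-pass survival forces acceptance. This is why the same observable argument can be reused verbatim for the fixed-length protocols $\Pi^{(n)}$ in Proposition~\ref{prop:effective_region_characterization}. Their accept-after-$n$-passes rule differs from the threshold rule of item 4 when $n>K_\delta$, so Theorem~\ref{thm:conditional_completeness} would not apply to them directly.

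One harmless nit: on accepted runs under the padding convention, $M_n=1/R_{K_\delta}$ rather than $1/R_n$ for $n>K_\delta$. Only rounds $n\le K_\delta$ matter, so this does not affect your argument.
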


\begin{proof}
The conditional false-pass probability at reached round $i$ is at most $\bar q_i$. Acceptance requires survival through $K_\delta$, so the conditional chain rule gives the first bound. For the true pair, let $\mathsf S_i:=\bigcap_{j=1}^{i}\mathsf{Pass}_j$. The response and execution conditions in Section~\ref{sec:zkp} imply
\[
\Pr(\mathsf{Pass}_i\mid\mathcal F_{i-1})
\ge
\Pr(\mathcal A_i\cap\mathcal E_i\mid\mathcal F_{i-1})
\ge a_ic_i.
\]
The final inequality follows by first conditioning on the augmented fields $\mathcal J_{i-1}$ and $\mathcal K_i$ and then applying iterated expectation back to the observable field $\mathcal F_{i-1}$.
Because $\mathsf S_{i-1}\in\mathcal F_{i-1}$, iteration gives $\Pr(\mathsf S_{K_\delta})\ge C_{K_\delta}$ without an independence assumption. All-pass survival through $K_\delta$ forces acceptance, proving the second bound.
\end{proof}

\begin{definition}[Certifiable effective region]
\label{def:certifiable_effective_region}
For the fixed process $\mathsf K_T$, define the review lengths that meet both certification targets
\begin{equation}
\mathsf{ER}_{\delta,\gamma}
:=
\left\{n\in\{1,\ldots,T\}:
R_n\le\delta
\ \text{and}\
C_n\ge1-\gamma
\right\}.
\label{eq:exact_effective_region}
\end{equation}
When this set is nonempty, write
$n_{\min}:=\min\mathsf{ER}_{\delta,\gamma}$ and
$n_{\max}:=\max\mathsf{ER}_{\delta,\gamma}$.
\end{definition}

This set contains exactly the lengths that meet both product criteria. A design outside the set may still meet the actual error targets; these bounds do not certify it.

\begin{proposition}[Effective-region characterization]
\label{prop:effective_region_characterization}
If $\mathsf{ER}_{\delta,\gamma}$ is nonempty, then
\begin{equation}
\mathsf{ER}_{\delta,\gamma}
=\{n_{\min},n_{\min}+1,\ldots,n_{\max}\},
\qquad
n_{\min}=K_\delta.
\label{eq:effective_region_interval}
\end{equation}
Every fixed-length protocol $\Pi^{(n)}$ with
$n\in\mathsf{ER}_{\delta,\gamma}$ is $(\delta,\gamma)$-certified. The primary early-stopping protocol accepts on an all-pass path at the unique first crossing $K_\delta=n_{\min}$. The right endpoint is the longest review that can be chosen in advance while meeting both targets.
\end{proposition}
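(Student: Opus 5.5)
The proof rests on monotonicity of the two cumulative products in \eqref{eq:cumulative_burdens}. Each factor satisfies $0<\bar q_i<1$ by \eqref{eq:false_survival_round}, so $R_n=\prod_{i\le n}\bar q_i$ is strictly decreasing in $n$. Each factor $a_ic_i$ lies in $[0,1]$, so $C_n=\prod_{i\le n}a_ic_i$ is nonincreasing in $n$. Consequently, $\{n\le T: R_n\le\delta\}$ is an upper set $\{K_\delta,\ldots,T\}$, nonempty exactly when $K_\delta$ is defined. Likewise, $\{n\le T: C_n\ge 1-\gamma\}$ is a lower set $\{1,\ldots,N_\gamma\}$, where $N_\gamma$ is its maximum. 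The effective region is the intersection of an upper set and a lower set of integers, so it is an interval whenever nonempty. Its minimum is $K_\delta$, since any $n\in\mathsf{ER}_{\delta,\gamma}$ satisfies $n\ge K_\delta$, and nonemptiness gives $C_{K_\delta}\ge C_n\ge 1-\gamma$, which places $K_\delta$ in the set. Its maximum $n_{\max}=N_\gamma$ is then automatically the longest prespecified length meeting both product criteria.

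Next I would show that each fixed-length $\Pi^{(n)}$ with $n$ in the region is $(\delta,\gamma)$-certified. This mirrors Proposition~\ref{prop:fallible_execution_bounds} with $K_\delta$ replaced by $n$. For soundness, the proof of Theorem~\ref{thm:anytime_soundness} gives $\Pr(\mathsf{Pass}_i\mid\mathcal F_{i-1})\le\bar q_i$ on every reached prefix, using items 1--2 of Assumption~\ref{assumption:effective_verifier_confidence}, which are assumed uniformly on prefixes reachable under $\Pi^{(n)}$. Acceptance requires the all-pass event $\mathsf S_n$, and $\mathsf S_{i-1}\in\mathcal F_{i-1}$. The conditional chain rule therefore gives $\Pr(\mathsf S_n)\le R_n\le\delta$ for every adaptive $P^*$. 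For completeness, the same argument as Proposition~\ref{prop:fallible_execution_bounds} shows $\Pr(\mathsf{Pass}_i\mid\mathcal F_{i-1})\ge a_ic_i$ on $\mathsf S_{i-1}$, using Condition~\ref{condition:response_reliable_prover} and item 3, with iterated expectation from $\mathcal J_{i-1},\mathcal K_i$ back to $\mathcal F_{i-1}$. This yields $\Pr(\mathsf S_n)\ge C_n\ge 1-\gamma$. Because $\Pi^{(n)}$ accepts exactly on $\mathsf S_n$, the probability of rejection is at most $\gamma$.

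Finally I would handle the early-stopping statement. On an all-pass path, $M_n=1/R_n$ by \eqref{eq:certificate_update}. Since $R_n$ is strictly decreasing, the first $n$ with $M_n\ge1/\delta$ is uniquely $K_\delta$, which we have identified as $n_{\min}$. This is a deterministic computation once the bounds are fixed.

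I expect the main obstacle to be bookkeeping rather than mathematics. The reliability bounds must be shown to apply on the prefixes reachable under each protocol $\Pi^{(n)}$, not only under $\Pi^{\mathrm{ES}}$. For $n>K_\delta$, $\Pi^{(n)}$ continues past the point where $\Pi^{\mathrm{ES}}$ would have stopped. I would invoke the standing hypothesis that the bounds hold uniformly on every prefix reachable under each evaluated protocol, and note that the conditional chain rule requires no independence.
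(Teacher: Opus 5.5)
Your proposal is correct and follows the paper's proof. Monotonicity of $R_n$ and $C_n$ makes the region the intersection of a suffix starting at $K_\delta$ with a prefix. The conditional chain rule over the all-pass events $\mathsf S_i$, as in Proposition~\ref{prop:fallible_execution_bounds}, then bounds false and true acceptance of $\Pi^{(n)}$ by $R_n$ and $C_n$. The early-stopping claim follows from $M_n=1/R_n$ on all-pass paths and the definition of $K_\delta$, and your explicit check that $C_{K_\delta}\ge C_n\ge1-\gamma$ fills in a step the paper leaves implicit.
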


\begin{proof}
Because $\bar q_i,a_i,c_i\in[0,1]$, both $R_n$ and $C_n$ are nonincreasing. Hence $\{n:R_n\le\delta\}$ is a suffix beginning at $K_\delta$, while $\{n:C_n\ge1-\gamma\}$ is a prefix; their intersection is the integer interval in~\eqref{eq:effective_region_interval}. For $\Pi^{(n)}$, the validity assumption for all reachable histories and conditional chain rule bound false acceptance by $R_n\le\delta$; the same observable all-pass argument as in Proposition~\ref{prop:fallible_execution_bounds} lower-bounds true acceptance by $C_n\ge1-\gamma$. The early-stopping statement follows from the definition of $K_\delta$.
\end{proof}

An additive condition is also useful for illustrating effort allocation. The next corollary obtains it by upper-bounding the product $R_n$ in terms of $\sum_{i=1}^{n}c_i$.

\begin{corollary}[Additive sufficient effective region]
\label{cor:additive_sufficient_effective_region}
The more conservative set
\begin{equation}
\mathsf{ER}_{\delta,\gamma}^{\mathrm{suf}}
:=
\left\{n\le T:
(1-q)\sum_{i=1}^{n}c_i\ge\log(1/\delta)
\ \text{and}\
C_n\ge1-\gamma
\right\}.
\label{eq:certifiable_effective_region_sufficient}
\end{equation}
is an inner approximation to the exact effective region: $\mathsf{ER}_{\delta,\gamma}^{\mathrm{suf}}\subseteq\mathsf{ER}_{\delta,\gamma}$. It is an integer interval whenever nonempty, and every member certifies both $\Pi^{(n)}$ and the primary protocol $\Pi^{\mathrm{ES}}$.
\end{corollary}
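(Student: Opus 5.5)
The plan is to reduce the additive condition to the product condition $R_n\le\delta$ with the elementary inequality $1-x\le e^{-x}$, and then reuse Propositions~\ref{prop:effective_region_characterization} and~\ref{prop:fallible_execution_bounds}.

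\textbf{Step 1: the first condition implies $R_n\le\delta$.} Since $c_i(1-q)\in[0,1]$, we have
\[
\bar q_i=1-c_i(1-q)\le \exp\bigl(-c_i(1-q)\bigr).
\]
Multiplying over $i\le n$ gives
\[
R_n\le\exp\Bigl(-(1-q)\sum_{i=1}^n c_i\Bigr).
\]
Whenever $(1-q)\sum_{i\le n}c_i\ge\log(1/\delta)$, the right-hand side is at most $\delta$. The completeness condition $C_n\ge1-\gamma$ is identical in the two sets. Hence every $n\in\mathsf{ER}^{\mathrm{suf}}_{\delta,\gamma}$ lies in $\mathsf{ER}_{\delta,\gamma}$, which proves the inclusion.

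\textbf{Step 2: interval structure.} The partial sums $\sum_{i\le n}c_i$ are nondecreasing in $n$ because $c_i\ge0$. So the additive condition defines a suffix of $\{1,\dots,T\}$. As in the proof of Proposition~\ref{prop:effective_region_characterization}, $C_n$ is nonincreasing, so $\{n: C_n\ge1-\gamma\}$ is a prefix. A suffix intersected with a prefix is an integer interval, whenever it is nonempty.

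\textbf{Step 3: certification of $\Pi^{(n)}$.} For each member $n$, Step 1 places $n$ in $\mathsf{ER}_{\delta,\gamma}$. Proposition~\ref{prop:effective_region_characterization} then certifies $\Pi^{(n)}$ directly.

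\textbf{Step 4: certification of $\Pi^{\mathrm{ES}}$.} Nonemptiness of $\mathsf{ER}^{\mathrm{suf}}_{\delta,\gamma}$ makes $\mathsf{ER}_{\delta,\gamma}$ nonempty, so $K_\delta$ is defined and equals $n_{\min}\le n$. Since $C$ is nonincreasing, $C_{K_\delta}\ge C_n\ge1-\gamma$. Proposition~\ref{prop:fallible_execution_bounds} then gives $(\delta,\gamma)$-certification of $\Pi^{\mathrm{ES}}$.

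\textbf{Main subtlety.} The statement that every member certifies $\Pi^{\mathrm{ES}}$ really asserts only that nonemptiness suffices. It relies on $K_\delta$ possibly being strictly smaller than $n$ and on $C$ being monotone. I will make this step explicit. The remaining arguments are routine.
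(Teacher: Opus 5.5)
Your proposal is correct and takes essentially the same route as the paper's proof. Like the paper, you bound $R_n$ by $\exp\bigl(-(1-q)\sum_{i\le n}c_i\bigr)$, derive the interval structure from monotone partial sums and nonincreasing $C_n$, and certify $\Pi^{\mathrm{ES}}$ via $K_\delta\le n$ and $C_{K_\delta}\ge C_n\ge1-\gamma$; you also make the $\Pi^{(n)}$ certification explicit through Proposition~\ref{prop:effective_region_characterization}, which the paper leaves implicit in the inclusion.
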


\begin{proof}
\begin{equation}
R_n
=\prod_{i=1}^{n}\bigl(1-c_i(1-q)\bigr)
\le
\exp\!\left(-(1-q)\sum_{i=1}^{n}c_i\right),
\label{eq:soundness_condition}
\end{equation}
The additive inequality therefore implies $R_n\le\delta$. Its left-hand side is nondecreasing and $C_n$ is nonincreasing, so the set is an integer interval. Finally, any member has $K_\delta\le n$ and $C_{K_\delta}\ge C_n\ge1-\gamma$, which certifies $\Pi^{\mathrm{ES}}$ as well.
\end{proof}

\begin{figure}[!htbp]
\centering
\includegraphics[width=\textwidth]{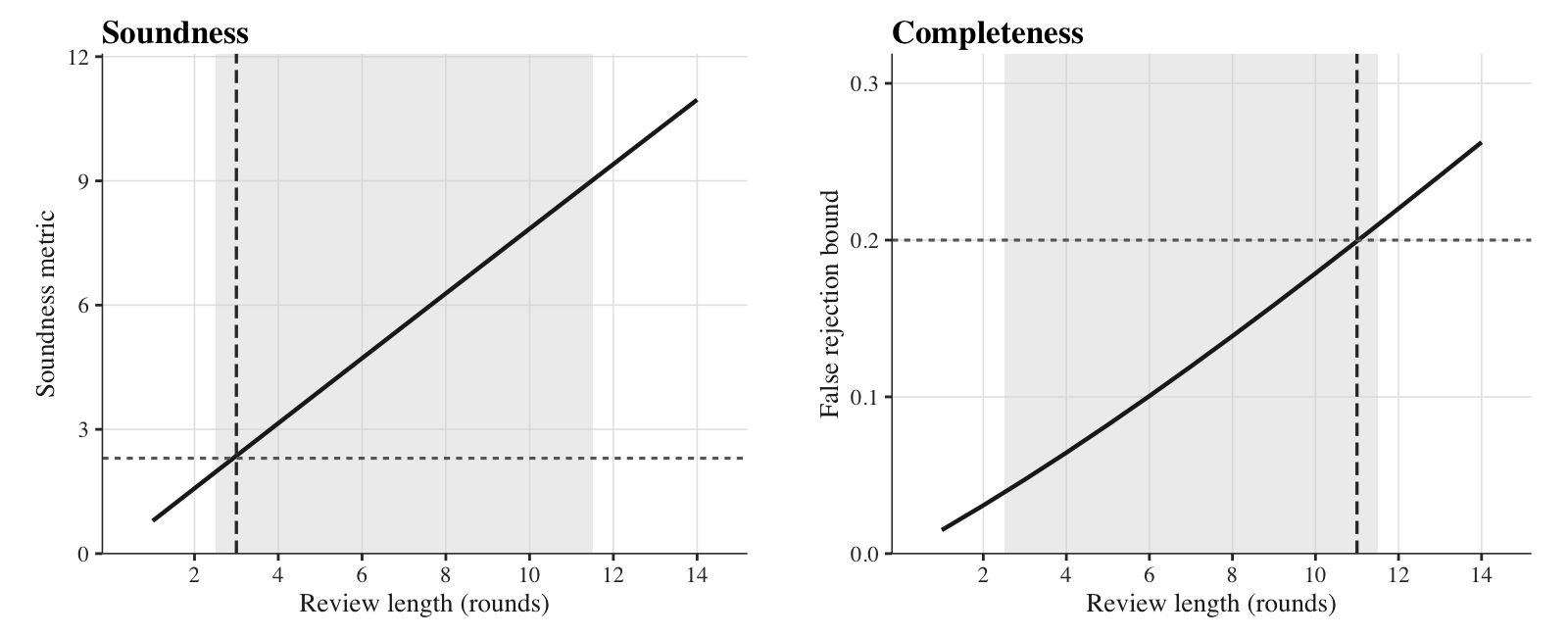}
\caption{Additive sufficient effective region for one fixed bound sequence. Left: the soundness criterion gives the minimum number of checks. Right: the completeness criterion gives the maximum. Their overlap (shaded) is $\mathsf{ER}_{\delta,\gamma}^{\mathrm{suf}}=\{3,\ldots,11\}$, for $T=14$, $q=0.20$, $\delta=0.10$, $\gamma=0.20$, $a_i=1$, and $c_i=0.985-0.001(i-1)$. Each shaded length is a review that can be chosen before execution. The primary protocol stops at $K_\delta$ using the exact product criterion, which may require fewer checks than the additive condition shown here.}
\label{fig:effective_region}
\end{figure}

Appendix~\ref{app:worked_certificates} applies these product bounds to a matrix-product check with a nonempty certifiable region and a scheduling review whose required number of checks exceeds its budget.

\subsection{Effort, Cognitive Load, and Expertise}

Cognitive load describes the understanding, memory, and integration required to execute a prescribed check, including its preceding context. It is distinct from the prover's search time and may depend on how evidence is presented.

Let $E$ be the common total human-review budget. The effort $e_i\ge0$ for local round $i$ includes all work needed to prepare, perform, and interpret its check; $e_{\mathrm{os}}\ge0$ is defined analogously for the global check. All effort is counted once, with $\sum_{i=1}^{n}e_i\le E$ and $e_{\mathrm{os}}\le E$. The designs must also meet common computation and communication caps, and their execution bounds must cover all operations not performed by trusted procedures.

Let $\xi_\iota\ge0$ summarize task-relevant expertise and tool access, and let $\lambda_i\ge0$ denote a round's cognitive load, including required context. Fix each design's loads before execution. We use the parameterization
\begin{equation}
c_i=f(e_i,\xi_\iota)
\exp\!\left(-\frac{\lambda_i}{r(\xi_\iota)}\right).
\label{eq:ci_hardness}
\end{equation}
Here $f\in[0,1]$ is weakly increasing in effort and expertise, and the positive scale $r$ is weakly increasing in expertise. The exponential factor represents a load penalty, not computational running time. The resulting execution bounds must satisfy Assumption~\ref{assumption:effective_verifier_confidence}, including after actual adversarial responses. A diagnostic bound $q$ requires its own justification.

The preceding protocols take the first rounds of the same fixed checking process. We now allow each candidate length its own effort allocation, chosen in advance, and corresponding justified bounds $a_i^{(n)},c_i^{(n)}$. Equal allocation gives $e_i^{(n)}=E/n$. Among candidate lengths feasible within the common budgets, define
\begin{equation}
\mathsf{ER}_{\delta,\gamma}^{\mathrm{suf}}(E)
:=
\left\{n\le T:
\begin{array}{l}
(1-q)\sum_{i=1}^{n}c_i^{(n)}\ge\log(1/\delta),\\
\prod_{i=1}^{n}a_i^{(n)}c_i^{(n)}\ge1-\gamma
\end{array}
\right\}.
\label{eq:effort_effective_region}
\end{equation}
Under equal allocation, holding all other design parameters fixed, increasing $E$ preserves certification because $f$ is nondecreasing. Each allocation requires its own crossing calculation. At fixed $E$, the feasible lengths need not form an interval. Figure~\ref{fig:effort_budget} illustrates contiguous sets under its chosen parameterization; Appendix~\ref{app:fig3_reproducibility} gives the specification.

\begin{figure}[t]
\centering
\includegraphics[width=\textwidth]{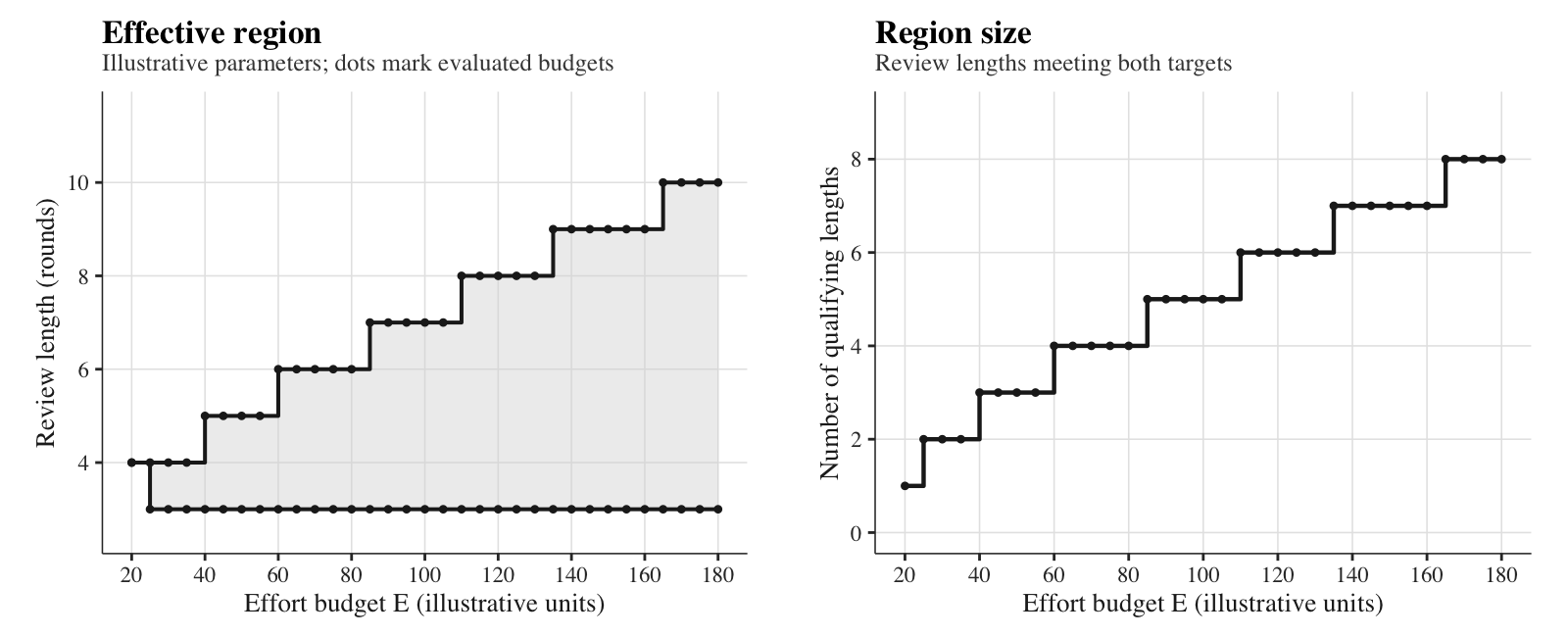}
\caption{Effort and the sufficient effective region with $e_i^{(n)}=E/n$. Left: lower and upper certified lengths. Right: number of certified lengths. Dots mark $E\in\{20,25,\ldots,180\}$; steps connect the evaluated budgets and do not identify where changes occur between them. In this example, certified lengths are consecutive in $n$, and increasing $E$ retains all previously certified lengths. Here $T=14$, $q=0.20$, $\delta=0.10$, $\gamma=0.20$, $a_i^{(n)}=1$, and $\xi_\iota=10$; Appendix~\ref{app:fig3_reproducibility} gives the full specification.}
\label{fig:effort_budget}
\end{figure}

\subsection{Conditional Advantage of Decomposition}
\label{sec:separation}

For each task, we compare the designs on the same fixed claim, candidate answer, and source records. Both have the same verifier tools and capabilities and must satisfy the common budgets above. A global checker may perform any internal operations specified by its algorithm; its ideal false-pass bound $q_{\mathrm{os}}$ is justified separately from the local bound $q$. A complete exact check can have $q_{\mathrm{os}}=0$.

The ideal diagnostic bound concerns correct execution. For a long proof, the reading and cross-step integration required by a complete check may exceed the human verifier's budget or make a strong execution bound difficult to justify. Local challenges allocate review effort to selected checks; the diagnostic, response, and execution bounds of the resulting policy determine whether the fixed claim can be certified.

Under~\eqref{eq:ci_hardness}, let
\[
c_{\mathrm{os}}:=f(e_{\mathrm{os}},\xi_\iota)e^{-\lambda_{\mathrm{os}}/r(\xi_\iota)},
\qquad \Delta:=\lambda_{\mathrm{os}}-\sum_{i=1}^{n}\lambda_i.
\]
For positive $f$-terms, the comparison of execution lower bounds becomes
\begin{equation}
\begin{aligned}
\frac{\prod_{i=1}^{n}c_i}{c_{\mathrm{os}}}
&=\frac{\prod_{i=1}^{n}f(e_i,\xi_\iota)}{f(e_{\mathrm{os}},\xi_\iota)}
e^{\Delta/r(\xi_\iota)},\\
\prod_{i=1}^{n}c_i>c_{\mathrm{os}}
&\quad\Longleftrightarrow\quad
\Delta>r(\xi_\iota)\log\frac{f(e_{\mathrm{os}},\xi_\iota)}{\prod_{i=1}^{n}f(e_i,\xi_\iota)}.
\end{aligned}
\label{eq:execution_load_comparison}
\end{equation}
With equal allocation, $e_i=E/n$ and $e_{\mathrm{os}}=E$, this threshold is nonnegative. The sign and magnitude of $\Delta$ require support from the cognitive demands of the compared implementations. For a global response-adequacy bound $a_{\mathrm{os}}>0$, multiplying the execution ratio by $\prod_i a_i/a_{\mathrm{os}}$ gives the ratio of true-acceptance lower bounds. The proposition below combines these response and execution bounds with each design's diagnostic guarantee.

\begin{proposition}[Conditional certification advantage]
\label{prop:separation}
Fix $(\delta,\gamma)\in(0,1)^2$ and common review, computation, and communication budgets. Suppose a resource-feasible local design with $2\le n\le T$ satisfies the conditions of this section with bounds $(a_i,c_i)$ and $q$. Suppose a resource-feasible global checker satisfies the analogous response, execution, and diagnostic conditions, with independently justified bounds $a_{\mathrm{os}}\in(0,1]$, $c_{\mathrm{os}}\in(0,1]$, and $q_{\mathrm{os}}\in[0,1)$, and accepts exactly when its check passes. Define
\[
R_{\mathrm{os}}:=1-c_{\mathrm{os}}(1-q_{\mathrm{os}}),
\qquad C_{\mathrm{os}}:=a_{\mathrm{os}}c_{\mathrm{os}}.
\]
Both the fixed-length local protocol and the protocol that accepts at the first threshold crossing are certified for both targets if
\begin{equation}
R_n\le\delta,\qquad C_n\ge1-\gamma.
\label{eq:interactive_cert_conditions}
\end{equation}
The global bounds certify both targets when $R_{\mathrm{os}}\le\delta$ and $C_{\mathrm{os}}\ge1-\gamma$. Under the execution model, soundness certification at a nonnegative load requires $(1-q_{\mathrm{os}})f(e_{\mathrm{os}},\xi_\iota)\ge1-\delta$ and holds exactly when
\begin{equation}
\lambda_{\mathrm{os}}\le r(\xi_\iota)\log\frac{(1-q_{\mathrm{os}})f(e_{\mathrm{os}},\xi_\iota)}{1-\delta}.
\label{eq:one_shot_hardness_threshold}
\end{equation}

In particular, for $a_i\ge a>0$, $\lambda_i\le\lambda_0$, and equal round efforts $e_1=\cdots=e_n$, let
\begin{equation}
\begin{aligned}
\underline c_n&:=f(e_1,\xi_\iota)e^{-\lambda_0/r(\xi_\iota)},\\
\tau_n&:=\max\left\{
\frac{1-\delta^{1/n}}{1-q},\;
\frac{(1-\gamma)^{1/n}}{a}\right\}.
\end{aligned}
\label{eq:local_reliability_threshold}
\end{equation}
The sufficient test $\underline c_n\ge\tau_n$ implies~\eqref{eq:interactive_cert_conditions}. It holds exactly when $f(e_1,\xi_\iota)\ge\tau_n$ and
\[
\lambda_0\le r(\xi_\iota)\log\frac{f(e_1,\xi_\iota)}{\tau_n}.
\]
If the local criteria hold while $R_{\mathrm{os}}>\delta$ or $C_{\mathrm{os}}<1-\gamma$, these bounds establish a conditional certification advantage over the specified global design.
\end{proposition}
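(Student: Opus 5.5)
The proof is a sequence of reductions to results already established, followed by elementary algebra on the execution model~\eqref{eq:ci_hardness}. We treat the three claims separately: local certification, global certification together with its load threshold, and the uniform sufficient test.

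\emph{Local certification.} Suppose~\eqref{eq:interactive_cert_conditions} holds. Then $n\in\mathsf{ER}_{\delta,\gamma}$ by Definition~\ref{def:certifiable_effective_region}. Proposition~\ref{prop:effective_region_characterization} then certifies the fixed-length protocol $\Pi^{(n)}$. For the early-stopping protocol, note that $R_n\le\delta$ makes $K_\delta$ defined with $K_\delta\le n$. Since $C_m$ is nonincreasing, $C_{K_\delta}\ge C_n\ge1-\gamma$, and Proposition~\ref{prop:fallible_execution_bounds} certifies $\Pi^{\mathrm{ES}}$.

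\emph{Global design.} This is the one-round case of the same argument. The check accepts exactly when it passes, so we rerun the computation in the proof of Theorem~\ref{thm:anytime_soundness} for a single round. It shows that the pass probability on a false statement is at most $1-c_{\mathrm{os}}(1-q_{\mathrm{os}})=R_{\mathrm{os}}$. The completeness chain from Theorem~\ref{thm:conditional_completeness} gives true acceptance at least $a_{\mathrm{os}}c_{\mathrm{os}}=C_{\mathrm{os}}$.

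For the load threshold, substitute $c_{\mathrm{os}}=f(e_{\mathrm{os}},\xi_\iota)e^{-\lambda_{\mathrm{os}}/r}$ into $R_{\mathrm{os}}\le\delta$. This gives
\[
(1-q_{\mathrm{os}})f(e_{\mathrm{os}},\xi_\iota)e^{-\lambda_{\mathrm{os}}/r}\ge1-\delta.
\]
At $\lambda_{\mathrm{os}}\ge0$ the exponential factor is at most $1$, so the inequality forces $(1-q_{\mathrm{os}})f\ge1-\delta$. When this holds, $f>0$ and we may take logarithms, which yields~\eqref{eq:one_shot_hardness_threshold}. Each step is reversible, so the threshold is exact.

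\emph{Uniform sufficient test.} With $a_i\ge a$ and $c_i\ge\underline c_n$, we have $R_n\le(1-\underline c_n(1-q))^n$ because $\bar q_i$ is decreasing in $c_i$. We also have $C_n\ge(a\underline c_n)^n$. The condition $\underline c_n\ge(1-\delta^{1/n})/(1-q)$ rearranges to $(1-\underline c_n(1-q))^n\le\delta$. The condition $\underline c_n\ge(1-\gamma)^{1/n}/a$ gives $(a\underline c_n)^n\ge1-\gamma$. Together these are~\eqref{eq:interactive_cert_conditions}.

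To see that $c_i\ge\underline c_n$, observe that with equal efforts and $\lambda_i\le\lambda_0$, monotonicity of the exponential gives the bound for every $i$. The equivalence with the $\lambda_0$ condition is the same logarithm manipulation as in the global case, using $\tau_n>0$. The final sentence of the proposition is then immediate: the local bounds certify while the global bounds fail to certify.

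The only delicate point is bookkeeping rather than any single inequality: the global single-round bounds must be justified by the same conditional arguments as the theorems, with the horizon and acceptance rule adapted to one check. There is also an edge case to note. Since $a\le1$, the term $(1-\gamma)^{1/n}/a$ may exceed $1$; the test then simply fails, which is consistent with the requirement $f\ge\tau_n$.
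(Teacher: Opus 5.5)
Your proposal is correct and follows the paper's proof essentially step for step. For the local design, the conditional-product bounds together with $K_\delta\le n$ and $C_{K_\delta}\ge C_n$ let Proposition~\ref{prop:fallible_execution_bounds} certify the early-stopping protocol. For the global design, the one-check versions of the soundness and completeness arguments give $R_{\mathrm{os}}$ and $C_{\mathrm{os}}$, and both load thresholds follow by substituting~\eqref{eq:ci_hardness} and taking logarithms.
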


Even a global check that detects every error when executed correctly can be difficult for a human to carry out reliably. Dividing the review into smaller checks can reduce the cognitive load of each step, while repeated diagnostic checks accumulate evidence about the same claim. This benefit has a cost: the fixed effort budget is spread across rounds, and every additional round requires another adequate response and correctly executed check. Under the same budgets, these conditions can guarantee the required reliability for the local review while leaving the global check without such a guarantee.

The two terms in $\tau_n$ make this tradeoff explicit. One requires enough diagnostic strength to control false acceptance; the other requires enough response and execution reliability to accept true claims.

\begin{proof}
The conditional-product argument gives false acceptance at most $R_n$ and true acceptance at least $C_n$. If~\eqref{eq:interactive_cert_conditions} holds, $K_\delta\le n$ and $C_{K_\delta}\ge C_n$, so Proposition~\ref{prop:fallible_execution_bounds} also certifies the primary protocol under the same fixed allocation. The one-check argument gives $R_{\mathrm{os}}$ and $C_{\mathrm{os}}$, including a zero false-pass bound for an ideally exact, perfectly executed check. Substituting the execution model into $c_{\mathrm{os}}(1-q_{\mathrm{os}})\ge1-\delta$ yields~\eqref{eq:one_shot_hardness_threshold} and its feasibility condition.

Using the common lower bound, $c_i\ge\underline c_n$ implies
\[
R_n\le[1-\underline c_n(1-q)]^n,
\qquad C_n\ge(a\underline c_n)^n.
\]
The right-hand sides meet their targets exactly when $\underline c_n\ge\tau_n$. Substitution gives the load threshold. All probability bounds use conditional expectations; independence is unnecessary.
\end{proof}

Failure of the global certificate does not show that its actual error rate is higher or that every global design fails. If a global check uses the same acceptance threshold with horizon one and $R_{\mathrm{os}}>\delta$, it cannot reach that threshold and must reject.

\paragraph{Comparison with an exact global checker.}
Table~\ref{tab:global_local_comparison} compares the two designs in the fixed matrix-product example of Appendix~\ref{sec:comparison_reproduction}. The global checker directly multiplies the matrices ($q_{\mathrm{os}}=0$), while the interactive design performs five independently checked Freivalds challenges ($q=1/2$). Both receive $E=120$: the global check uses $e_{\mathrm{os}}=120$, and each local round uses $e_i=24$. The interactive design first crosses the threshold at round five and meets both targets; the global bound exceeds $\delta$. The additive criterion remains more conservative: its diagnostic sum is $2.478731<\log20$, so it does not certify this interactive design.

\begin{table}[H]
\centering
\small
\begin{tabular}{lrrl}
\hline
Design & False-accept bound & True-accept bound & Certified? \\
\hline
Global & 0.061875 & 0.938125 & No \\
Interactive local & 0.032602 & 0.911219 & Yes \\
\hline
\end{tabular}
\caption{Conditional comparison at $E=120$, $\delta=0.05$, and $\gamma=0.10$, with common tools and resource caps.}
\label{tab:global_local_comparison}
\end{table}

The comparison depends on the execution conditions. Reducing only the assumed global load from $\lambda_{\mathrm{os}}=0.20$ to $0.10$ gives $R_{\mathrm{os}}=0.033856$ and $C_{\mathrm{os}}=0.966144$, satisfying both criteria. Appendix~\ref{sec:comparison_reproduction} gives the parameters and calculations.

\section{Empirical Simulation}
\label{sec:experiments}

We use LLM-only simulations to examine whether challenge--response dialogue helps a verifier select more accurate answers. In Prover--Verifier Deliberation (PVD), an LLM prover gives an answer and a set of checkable subclaims. A second LLM takes the verifier's role, accepting, rejecting, or challenging the weakest part of the transcript. Both models remain fixed throughout the exchange; the prover can respond to challenges by supplying support or revising its answer.

When the verifier accepts and the prover has never changed its answer, we label the attempt Accept~+~No~Change (ANC). The supporting argument may still be elaborated or revised. Rejected attempts and attempts accepted after an answer change are non-ANC. Comparing the correctness of these two groups tests whether stable acceptance is a useful signal of answer quality; Section~\ref{sec:pvd_instantiation} defines the ANC gap used below.

GPQA Diamond\footnote{\url{https://github.com/idavidrein/gpqa}} lets us compare this signal across model pairs. We then test whether verifiers reject deliberately wrong defenses and examine ANC on the more difficult Humanity's Last Exam (HLE) tasks.\footnote{\url{https://www.lastexam.ai/}} The GPQA and HLE results draw on \citet{sedoc2026trust}; the two deceptive-prover stress tests were conducted for this paper. These simulations examine dialogue behavior and answer selection, without validating the human-verifier assumptions or the theoretical certification guarantees.

\subsection{Empirical Proxy}
\label{sec:pvd_instantiation}

For a labeled dataset, we compare accuracy within the ANC and non-ANC groups:
\[
\text{Gap}=\Pr[\mathrm{correct}\mid \mathrm{ANC}]
-\Pr[\mathrm{correct}\mid \neg\mathrm{ANC}].
\]
A positive gap means that ANC selects more accurate answers on the evaluated tasks; a zero or negative gap means that it does not. Computing the gap requires correctness labels as well as recorded verdicts and answer changes. It measures the quality of the selected answers, rather than improvement in a given answer through revision.

\subsection{GPQA Diamond}
\label{sec:results_gpqa}

Table~\ref{tab:gpqa_simulation} reports PVD results for three prover--verifier pairs from \citet{sedoc2026trust}, each evaluated on all 198 GPQA Diamond questions with one attempt per question. The model pairs were chosen for the study without selection on their observed ANC gaps. The Sonnet/Haiku row uses the standard protocol rerun, while the GPT and Gemini rows use the earlier cross-model runs. We compute the same ANC and non-ANC accuracies from the recorded verdicts and answer changes in each run.

\begin{table}[t]
\centering
\caption{GPQA Diamond PVD simulations ($N=198$ per pair). ANC and non-ANC accuracies are percentages, with subgroup sizes $n$ in parentheses. Gap is their difference in percentage points, with a two-sided 95\% percentile-bootstrap interval from 10{,}000 non-stratified question-level resamples (seed 42). Flash-Lite denotes Gemini~3.1~Flash-Lite.}
\label{tab:gpqa_simulation}
\small
\setlength{\tabcolsep}{3pt}
\begin{tabular*}{\textwidth}{@{\extracolsep{\fill}}llccc@{}}
\hline
\textbf{Prover} & \textbf{Verifier} & \shortstack{\textbf{ANC}\\\textbf{Acc. ($n$)}} & \shortstack{\textbf{non-ANC}\\\textbf{Acc. ($n$)}} & \textbf{Gap (95\% CI)} \\
\hline
Sonnet 4.6 & Haiku 4.5 & 93.5 (108) & 51.1 (90) & +42.4 [+31.0, +53.5] \\
GPT-5.4 & GPT-5.4-mini & 97.6 (85) & 62.8 (113) & +34.8 [+25.5, +44.3] \\
Gemini 3.1 Pro & Flash-Lite & 97.3 (112) & 90.7 (86) & +6.6 [$-0.1$, +13.7] \\
\hline
\end{tabular*}
\end{table}

The Sonnet/Haiku and GPT pairings show large positive ANC gaps. The Gemini pairing has high accuracy in both subsets and a smaller gap whose interval includes zero. Thus, high ANC accuracy alone does not establish that stable acceptance separates better answers from the remaining cases. These runs describe the signal for the tested pairings; they do not isolate the effect of changing a model under a common experimental configuration.

A separate single-call self-deliberation rerun uses one Sonnet~4.6 response containing alternating prover and verifier blocks on the same 198 questions. Its ANC gap is +18.6 points (95\% CI: +6.2 to +31.3), compared with +42.4 for the standard Sonnet/Haiku rerun. The single-call prompt allows six rounds and requires an initial challenge\footnote{One response accepted without the required challenge; we count it as ANC because the answer was unchanged.}; the standard rerun allows twelve rounds and permits immediate acceptance. Differences in verifier model and protocol settings prevent attributing the gap difference to call structure alone.

\subsection{Deceptive-Prover Stress Test}
\label{sec:deceptive_soundness}

Soundness asks what happens when the statement under defense is false. We run two GPQA Diamond stress tests in which the prover receives the correct answer but is instructed to defend a wrong option. In the na\"ive condition, the prover selects a plausible wrong answer and argues for it. In the strategic condition, it is additionally told that the verifier is a smaller model and that it should maintain a convincing defense.

Both conditions use Sonnet~4.6 as prover and an unchanged Haiku~4.5 verifier that is unaware of the manipulation. Each condition includes all 198 questions with fatigue limit $T=12$. These tests examine two specified deceptive strategies, while the formal soundness guarantee covers every admissible adaptive prover $P^*$. Table~\ref{tab:deceptive_soundness} reports the outcomes.

\begin{table}[t]
\centering
\caption{Deceptive-prover stress tests conducted for this paper on GPQA Diamond ($N=198$ planned cases per condition, fatigue limit $T=12$). \textbf{Fool rate}: fraction of planned cases in which the verifier accepted the deliberately wrong answer. \textbf{Changed to correct answer}: fraction with an observed switch to the true option. Three na\"ive and one strategic cases ended after all three prover-response JSON parsing attempts failed; the operational protocol counted these cases as rejections.}
\label{tab:deceptive_soundness}
\small
\begin{tabular*}{\textwidth}{@{\extracolsep{\fill}}lccc@{}}
\hline
\textbf{Condition} & \textbf{Fool rate} & \textbf{Rejected} & \textbf{Changed to correct answer} \\
\hline
Na\"ive deceptive prover & 12/198 (6.1\%) & 186/198 (93.9\%) & 0/198 (0.0\%) \\
Strategic deceptive prover & 18/198 (9.1\%) & 180/198 (90.9\%) & 0/198 (0.0\%) \\
\hline
\end{tabular*}
\end{table}

The operational rule rejects 186/198 (93.9\%) na\"ive and 180/198 (90.9\%) strategic cases. Every accepted case retains the original wrong answer with zero recorded answer changes, so all 12 na\"ive accepts and all 18 strategic accepts are ANC cases with wrong final answers. The strategic run's fool rate is 3.0 percentage points higher, but an exact paired McNemar test does not distinguish the conditions ($p=0.307$); with one stochastic run per condition, the comparison does not identify a prompt effect.

These results show that most deliberately wrong defenses are rejected under the tested interventions, while some still achieve ANC. The observed rates depend on how the wrong answer is constructed and on the model pair, parsing policy, and challenge policy. Because the LLM verifier's diagnostic and execution bounds remain unverified, these rates do not establish formal soundness and are not estimates of the single-round \(q\) parameter in Section~\ref{sec:fallibility}.

\subsection{HLE}
\label{sec:results_hle}

The HLE runs use the same 513 multiple-choice questions for all three prover--verifier pairs. Table~\ref{tab:hle_boundary_simulation} reports overall accuracy and, where answer-change records are available, the ANC gap. The Sonnet/Haiku point estimate is negative, although its interval includes zero; the GPT/Gemini pairing has a positive gap.

\begin{table}[t]
\centering
\caption{PVD simulations on the same HLE multiple-choice subset ($N=513$ per pair). \textbf{Acc.}: overall accuracy. \textbf{Gap}: ANC minus non-ANC accuracy in percentage points, with 95\% intervals computed as in Table~\ref{tab:gpqa_simulation}. The Opus/Sonnet records omit answer changes, so its ANC gap cannot be computed.}
\label{tab:hle_boundary_simulation}
\small
\begin{tabular*}{\textwidth}{@{\extracolsep{\fill}}llcc@{}}
\hline
\textbf{Prover} & \textbf{Verifier} & \textbf{Acc.} & \textbf{Gap (95\% CI)} \\
\hline
Sonnet 4.6 & Haiku 4.5 & 20.1\% & $-7.1$ [$-13.9$, +0.1] \\
Opus 4.6 & Sonnet 4.6 & 40.0\% & --- \\
GPT-5.5 & Gemini 3.1 Pro & 45.6\% & +27.9 [+19.6, +36.3] \\
\hline
\end{tabular*}
\end{table}

For the Sonnet/Haiku pairing, accepted unchanged answers are less accurate than the remaining answers in this run. One possible explanation is that the verifier lacks the knowledge or tools to check difficult claims and challenges only those it partly understands. In the model of Section~\ref{sec:fallibility}, such limitations can mean a lower execution bound $c_i$ or a higher ideal false-pass bound $q$, leaving no certifiable review length. The observed gap does not identify either bound.

\subsection{Interpretation and Limitations}
\label{sec:experiments_summary}

On GPQA, answers accepted without revision are more accurate for some tested model pairs. Most deliberately wrong answers in the stress tests are rejected, though some are accepted unchanged. On HLE, the Sonnet/Haiku pair instead has a negative ANC gap. These results show when ANC selects more accurate answers in the tested runs; they do not validate the certification theorems.

To investigate this variation, a study could record the selected challenges, supplied evidence, and executed checks. These records would help distinguish a question that misses the error from missing evidence or an incorrectly performed check. Section~\ref{sec:human_verification_evaluation} proposes such measurements for human review.

\section{Discussion}
\label{sec:discussion}

Dividing verification into smaller checks can make each step easier for a bounded verifier to execute. Each additional round, however, requires another adequate response and reliable check. The certifiable region identifies review lengths for which the supplied bounds ensure both enough evidence against false claims and a sufficiently high probability of accepting true ones.

\subsection{When Interaction Helps}

A user may be able to check a particular inference, source record, or calculation even when producing the full argument is beyond their resources. Interaction lets the user request the supporting material needed for that check and use the result to decide what to examine next. Its potential value lies in making the work of verification manageable: selecting a useful question, locating the relevant evidence, and carrying out the check must all fit within the user's budget.

Proposition~\ref{prop:separation} gives conditions under which reducing the load of each check compensates for spreading effort across rounds and introducing additional opportunities for error. Under common budgets, the bounds for local review can meet both acceptance targets. For the specified global checker, the upper bound on false acceptance may still be too high, or the lower bound on true acceptance too low. An ideally exact global check can therefore be less useful for certification if the human cannot execute it reliably enough. The comparison concerns these specified implementations; whether dialogue offers an additional benefit over a static presentation of the same local evidence calls for a separate comparison.

The primary protocol accepts at the first threshold crossing, avoiding further checks once the evidence requirement has been met. If reaching that threshold requires more reliable work than the budget supports, a useful response is to improve the checks, their presentation, or the available tools. Simply extending the dialogue need not produce a certifiable review.

\subsection{Designing Checks for Bounded Verifiers}
\label{sec:challenge_design}

In the scheduling example, the organizer starts by selecting a constraint that bears on the claim that the schedule has no conflicts. Comparing Lee's overlapping assignments against the fixed speaker records can expose an error. Questions about session topics may help the organizer understand the schedule, but they leave that constraint unchecked. The complete list of constraints gives random sampling a detection guarantee because every conflicting schedule violates at least one listed constraint.

Once a constraint is selected, the organizer must locate the relevant records, identify the time intervals, and apply the comparison rule. If the selected constraint could have exposed a missed conflict, incomplete records indicate an inadequate response; a mistaken comparison despite complete records indicates an execution error. Displaying the records together or providing a comparison tool could reduce the work needed to perform the check correctly.

For later rounds, the organizer can combine follow-ups prompted by the dialogue with fresh random checks from the complete list. The construction in Appendix~\ref{app:anchored_validity} reserves a chosen probability each round for a reference check with a justified detection bound under correct execution, leaving the remaining questions free to follow the dialogue. The appendix derives the mixture's evidence gain and distinguishes a bound for an individual challenge from one that averages over the randomized policy.

The resulting evidence gain determines how many passed checks are needed before acceptance. In Appendix~\ref{app:worked_certificates}, a matrix-product check reaches the threshold within its budget under the stated reliability bounds, whereas the scheduling audit requires more rounds than are available.

\subsection{Evaluating Human Verification}
\label{sec:human_verification_evaluation}

The variation in ANC across the simulations motivates examining what verifiers actually check before accepting. In particular, the accepted wrong defenses in the stress tests make it useful to distinguish an answer that survives a diagnostic check from one that remains unchanged through an exchange of plausible arguments. A human study could record the selected challenges, evidence supplied, checks performed, and reasons for stopping, alongside the final verdict and the truth of the claim.

To test the benefit of decomposition and interaction, such a study could compare a specified global review, local review of fixed indexed evidence, and local review with evidence supplied on request. The conditions should use matched claims and source records, common tools, and the same total review budget, including time spent choosing questions and finding evidence. The two local conditions should draw on the same pool of evidence and use the same checks, with participants randomly assigned across conditions within expertise groups. Comparing global and static local review would help assess the effect of decomposition. Comparing static and interactive local review would assess the added value of supplying evidence in response to questions.

The key empirical question is how the balance between false acceptance and rejection of true claims changes as more rounds are allowed within a fixed budget. A lower false-acceptance rate could come at the cost of rejecting more true claims, so both outcomes need to be measured. Records of individual checks would help explain that tradeoff. For example, if longer reviews lead to more rejected true claims, independent assessment of the responses and checks could show whether usable evidence was missing or participants made mistakes despite having it. Examining where these failures occur across task difficulties and expertise groups would help determine whether dividing the review into smaller checks reduces the burden of execution enough to offset the additional exchanges.

\section{Limitations and Scope}
\label{sec:limitations}

The framework applies most directly to tasks with a usable truth condition and checks that a bounded verifier can perform. Its guarantees concern the claim as formalized. Checking a formal object does not establish that it faithfully represents the original natural-language claim, and checking a retrieved source does not establish that the source is correct. Applying the framework therefore also requires justifying these correspondences \citep{min2023factscore,jiang2023draft,yang2023leandojo}. Preference elicitation, creative tasks, contested judgments, unrestricted program behavior, and claims whose evidence is unavailable may benefit from dialogue without admitting the certification guarantees studied here.

The theorems combine supplied diagnostic, response, and execution bounds; they do not provide a general method for obtaining them from free-form dialogue. In particular, the diagnostic and execution bounds must hold after every relevant history against every admissible adaptive prover. A detection rate averaged across participants, rounds, or tasks is insufficient for this purpose. Empirical estimates require additional justification to support such bounds over the intended class of claims and interactions. The certification criteria are sufficient conditions for the specified protocols, so failure to meet them does not establish that the actual error targets are unattainable.

Whether people can reliably perform the proposed checks remains untested. Applying the response and execution models to human--LLM review requires evidence about how effort, expertise, and cognitive load affect the provision of usable evidence and the correct execution of checks. These relationships are theoretical inputs in the numerical examples. Even in the matrix-product example, where exact algebra supplies the diagnostic bound, the response and execution bounds remain assumptions. Counting checks or arithmetic operations does not establish how long human review takes or whether local review reduces cognitive load. Nor do the LLM-only simulations resolve this gap: they measure answer selection without reproducing human attention, responsibility, or fatigue, and cannot identify the human reliability parameters. Section~\ref{sec:human_verification_evaluation} proposes studies to measure these aspects of human review.

The comparison in Proposition~\ref{prop:separation} holds the claim, source records, tools, and budgets fixed and compares specified global and local implementations. It establishes a difference in what the supplied bounds certify, rather than an ordering of actual error rates or an advantage over all one-shot review methods. For example, the matrix-product check can also be applied to a fixed submitted result. Any benefit attributed specifically to dialogue must be distinguished from the benefit of making local evidence available.

Completeness assumes honest response behavior; the protocol does not elicit latent knowledge or solve alignment \citep{christiano2021elk,friedl2026impossibility}. Even with an honest prover, protecting private information requires an additional condition: the full verifier view must be efficiently simulatable without access to the prover's private state. Appendix~\ref{app:hvzk} shows how the prescribed stopping rule preserves this property when it holds for the continuation interaction. We have not constructed such a simulator for ordinary LLM dialogue or established this privacy guarantee for the scheduling and matrix-product examples. The result covers the specified verifier interacting with an honest prover on the stated true-instance family, including rejection and timeout; it does not cover false instances or verifier deviations outside these assumptions.

\section{Conclusion}
\label{sec:conclusion}

Human--LLM deliberation raises a verification problem when the user cannot readily reproduce the prover's underlying search or argument construction. We give conditions under which a human can verify the LLM's claim through questions and checks within their budget. Dialogue can expose supporting details on demand; acceptance is justified by diagnostic checks with valid error bounds. Under the stated assumptions, those checks yield anytime-valid soundness against adaptive provers and finite-horizon completeness that accounts for both LLM response inadequacy and human execution error.

The benefit of smaller checks depends on whether they reduce cognitive load enough to offset the division of effort and the additional opportunities for error. Under common resource budgets and separately justified bounds, the analysis identifies when those bounds certify a specified local review while leaving a specified global check uncertified. This can occur even when the global checker has a lower false-pass bound under correct execution.

The practical requirement is to build informative checks that fit the verifier's resources and justify their diagnostic and reliability bounds. The theorems combine the bounds for individual rounds into guarantees for the whole review, while a general construction for unrestricted natural-language dialogue remains open. When the required evidence cannot be obtained within the budget, the protocol outputs \emph{Reject}, indicating that the claim remains insufficiently verified.

\bibliography{ref}

\clearpage
\raggedbottom
\appendix
\section{Protocol Mapping and Notation}
\label{app:reference_tables}

Table~\ref{tab:protocol_mapping} relates the framework's components and guarantees to those of classical interactive proofs. Table~\ref{tab:notation} collects the notation used in Sections~\ref{sec:protocol}--\ref{sec:fallibility} and Appendices~\ref{app:hvzk}--\ref{app:worked_certificates}.

\begingroup
\footnotesize
\setlength{\LTpre}{0.35\baselineskip}
\setlength{\LTpost}{0.35\baselineskip}
\begin{longtable}{@{}p{0.20\textwidth}p{0.27\textwidth}p{0.47\textwidth}@{}}
\caption{Protocol objects and guarantees. The general human--LLM framework becomes a standard interactive proof only under the uniformity conditions in Corollary~\ref{thm:conditional_ip}.}
\label{tab:protocol_mapping}\\
\hline
\textbf{Classical object} & \textbf{Here} & \textbf{Role or condition} \\
\hline
\endfirsthead
\multicolumn{3}{c}{\tablename\ \thetable\ (continued)}\\
\hline
\textbf{Classical object} & \textbf{Here} & \textbf{Role or condition} \\
\hline
\endhead
\hline
\endfoot
Common and auxiliary inputs & $x=\langle s,i_{\mathrm{pub}}\rangle$, $z=i_{\mathrm{priv}}$, and $L$ & The target statement $s$ and public task information form the common input; verifier-only information is auxiliary input and does not define the language. \\[0.4em]
Prover & LLM strategy $P$ with private state $\theta$ & Produces history-dependent responses through $Q$; soundness quantifies over every adaptive admissible strategy. \\[0.4em]
Verifier & Human strategy $V$ & Uses challenge policy $\pi_V$, verification rule $U$, polynomially bounded messages, and maximum horizon $T$. \\[0.4em]
Challenge--response transcript & Histories $h_m$; output $(d^{(m')},\Omega)$ & The message sequence records the initial answer, verifier-selected challenges, and prover responses; $\Omega$ separately records the terminal verdict. \\[0.4em]
Honest-prover correctness & Bounds $a_m$ & An adequate local response occurs with probability at least $a_m$, uniformly over the augmented analysis history; ideal response adequacy is the case $a_m=1$. \\[0.4em]
Verifier validity & Bounds $(q_m,c_m)$ & Correct execution occurs with probability at least $c_m$; the prescribed correct checker has false-pass probability at most $q_m$, giving $\bar q_m=1-c_m(1-q_m)$. \\[0.4em]
Completeness & Theorem~\ref{thm:conditional_completeness} & For a true statement and a response-reliable honest prover, acceptance has probability at least $\prod_{m=1}^{K}a_mc_m$ under the stated verifier and progress conditions, with $K\le T$. \\[0.4em]
Soundness & Theorem~\ref{thm:anytime_soundness} & For a false statement and every adaptive admissible prover, the probability of crossing the acceptance threshold at any round $m\le T$ is at most $\delta$, under the stated verifier conditions. \\[0.4em]
Interactive-proof status & Corollary~\ref{thm:conditional_ip} & Requires a formal promise problem, a uniform PPT verifier, polynomial communication and horizon, and standard completeness/soundness quantifiers. \\[0.4em]
HVZK preservation & Corollary~\ref{prop:conditional_zk} & Assumes a uniform full-view simulator for the specified continuation and an efficient prefix map reproducing the actual stopped view; preserves computational HVZK on the covered true-instance family. \\
\end{longtable}
\endgroup

\small
\setlength{\LTpre}{0pt}
\setlength{\LTpost}{0pt}
\begin{longtable}{@{}p{0.25\textwidth}p{0.68\textwidth}@{}}
\caption{Core notation for the protocol, sequential certification, deterministic specialization, and conditional privacy result.}
\label{tab:notation}\\
\hline
\textbf{Symbol} & \textbf{Meaning} \\
\hline
\endfirsthead
\multicolumn{2}{c}{\tablename\ \thetable\ (continued)}\\
\hline
\textbf{Symbol} & \textbf{Meaning} \\
\hline
\endhead
\hline
\endfoot
$\Sigma$, $X=\Sigma^*$, $\mathcal H$, $\parallel$ & Finite alphabet, token space, well-formed encoded histories, and self-delimiting append operation. \\[0.4em]
$\kappa(s,i)$, $L_{\max}$ & Input-size parameter and polynomial message-length cap for admissible executions. \\[0.4em]
$\mathcal A_0$, $S$, $v$, $L$ & Base axiom system, statement space, total truth map, and language $L=\{s:v(s)=\top\}$. \\[0.4em]
$P$, $P^{\mathrm H}$, $P^*$ & Implemented prover, honest prover used for completeness, and arbitrary adaptive prover used for soundness. \\[0.4em]
$\Theta$, $\theta$, $Q$ & Prover private-state space, realized private state, and production function $Q:\mathcal H\times\Theta\to X$. \\[0.4em]
$\mathbb I$, $\mathcal I$, $i=(i_{\mathrm{pub}},i_{\mathrm{priv}})$, $\mathcal A_i$ & Verifier information universe, representable information states, their public/private split for formal IP/HVZK instantiations, and verifier-side axioms. \\[0.4em]
$\mathcal C$, $C$, $\pi_V$, $U$ & Challenge-token universe, available challenge set, challenge-selection policy, and decision function. \\[0.4em]
$\upsilon\in X$ & Persistent verifier state containing check outcomes, certificate values, and consumed randomness; suppressed in abbreviated policy and decision notation. \\[0.4em]
$T$, $h_m$, $\mathbf H_m$, $\mathbb P^{s,P}$ & Maximum horizon, realized and random histories after round $m$, and execution law for statement $s$ and prover $P$. \\[0.4em]
$\mathcal F_m$, $\mathcal G_m$ & Post-check observable filtration through round $m$ and observable pre-check field for the current challenge and response. \\[0.4em]
$\mathcal J_m$, $\mathcal K_m$ & Analysis history retaining response-adequacy and execution events through round $m$, and its pre-check counterpart $\mathcal K_m=\mathcal J_{m-1}\vee\mathcal G_m$. \\[0.4em]
$\mathcal A_m$, $a_m$ & Local response-adequacy event and its deterministic lower bound conditional on the augmented past for a true statement; clarification does not overwrite $a_m$. \\[0.4em]
$\mathcal E_m$, $c_m$ & Correct human-execution event and its deterministic lower bound conditional on the full pre-check field; clarification does not overwrite $c_m$. \\[0.4em]
$\mathsf{Pass}_m$, $r_m^\circ$, $q_m$, $\bar q_m$ & Passed diagnostic-check event, prescribed correct-check pass probability, the predictable bound on its challenge-policy average for a false claim, and the implemented bound $1-c_m(1-q_m)$. \\[0.4em]
$\nu_m$, $\mu_m$, $\lambda_m^{\mathrm{anc}}$, $\rho_m$ & Reference and deployed challenge kernels, predictable anchor mass, and worst-case reference detection bound in Appendix~\ref{app:anchored_validity}. \\[0.4em]
$\delta$, $\gamma$ & Target upper bounds on false acceptance and failure to accept a true claim, respectively. \\[0.4em]
$M_m$ & Formal multiplicative certification process; $\log M_m$ is its cumulative log evidence. \\[0.4em]
$g_m=-\log\bar q_m$ & Implemented diagnostic log-evidence credited to a passed check; clarification has zero gain. \\[0.4em]
$x_m$, $y_m$, $\tau$, $\Omega$, $d^{(m')}$ & Round-$m$ challenge and response, stopping time, terminal verdict, and final transcript. \\[0.4em]
$\mathsf K_T$, $\Pi^{\mathrm{ES}}$, $\Pi^{(n)}$ & Full-horizon unwrapped diagnostic kernel, first-crossing wrapper, and precommitted $n$-round companion wrapper. \\[0.4em]
$R_n$, $C_n$, $K_\delta$ & Deterministic false-survival product, true-path product, and first soundness-eligible crossing. \\[0.4em]
$\mathsf{ER}_{\delta,\gamma}$, $n_{\min}$, $n_{\max}$ & Product-based certifiable effective region and its boundaries; $n_{\min}=K_\delta$ when the region is nonempty. \\[0.4em]
$\mathsf{ER}_{\delta,\gamma}^{\mathrm{suf}}$ & Additive inner approximation to the exact effective region; its members are precommitted certificate lengths, not post hoc stopping choices. \\[0.4em]
$\mathsf{ER}_{\delta,\gamma}^{\mathrm{suf}}(E)$ & Effort-indexed sufficient effective region when each candidate reallocates total effort and therefore defines a different design. \\[0.4em]
$E$, $e_i$, $e_{\mathrm{os}}$ & Total review budget and effort allocated to local rounds or the global check, including preparation, execution, and interpretation. \\[0.4em]
$\xi_\iota$, $\lambda_i$, $f$, $r$ & Expertise/tool state, local cognitive load including context, and illustrative execution-reliability functions. \\[0.4em]
$\lambda_{\mathrm{os}}$, $c_{\mathrm{os}}$, $\Delta$ & Global-check load, execution bound, and global load minus accumulated local loads. \\[0.4em]
$q_{\mathrm{os}}$, $a_{\mathrm{os}}$, $R_{\mathrm{os}}$, $C_{\mathrm{os}}$ & Separately justified global diagnostic and response-adequacy bounds, and resulting false-acceptance upper and true-acceptance lower bounds. \\[0.4em]
$\underline c_n$, $\tau_n$ & Uniform local execution envelope and sufficient reliability threshold for both error targets. \\[0.4em]
$\mathcal Y$, $\Theta_H(\kappa,x)$ & Covered true instances and prespecified honest private-state family for conditional HVZK preservation. \\[0.4em]
$V^{\mathrm{cont}}$, $V^{\mathrm{stop}}$, $\mathsf{View}^{\theta}_{b}$ & Specified continuation and stopped verifiers and their full views under $P_\theta$, for $b\in\{\mathrm{cont},\mathrm{stop}\}$. \\[0.4em]
$\mathsf{Sim}_0$, $\mathsf{Sim}_{\mathrm{stop}}$, $F$ & Assumed continuation-view simulator, composed stopped-view simulator, and uniform prefix map, with $\mathsf{Sim}_{\mathrm{stop}}=F\circ\mathsf{Sim}_0$. \\
\end{longtable}
\normalsize

\clearpage
\section{Reproducing Figure 3}
\label{app:fig3_reproducibility}

Figure~\ref{fig:effort_budget} evaluates the sufficient region in~\eqref{eq:effort_effective_region} for 33 review budgets, $E=20,25,\ldots,180$, and 14 candidate lengths, $n=1,\ldots,14$, with $e_i^{(n)}=E/n$. The fixed parameters are
\[
T=14,\qquad q=0.20,\qquad \delta=0.10,\qquad
\gamma=0.20,\qquad a_i^{(n)}=1,\qquad \xi_\iota=10.
\]
Using natural logarithms, the full execution parameterization is
\[
\begin{aligned}
r(\xi)&=1+\log(1+\xi),\\
b(\xi)&=0.91+0.035\,\frac{\log(1+\xi)}{1+\log(1+\xi)},\\
f(e,\xi)&=\min\!\left\{0.995,\ b(\xi)+0.10\bigl(1-\exp(-0.04e)\bigr)\right\},\\
\lambda_i&=0.010+0.001(i-1),\qquad
c_i^{(n)}=f(E/n,10)\exp\!\left(-\frac{\lambda_i}{r(10)}\right).
\end{aligned}
\]
In particular, $r(10)\approx3.39789527$ and $f(0,10)=b(10)\approx0.93469951$.

For each $(E,n)$, the calculation retains $n$ exactly when
\[
0.8\sum_{i=1}^{n}c_i^{(n)}\ge\log 10
\qquad\text{and}\qquad
\prod_{i=1}^{n}c_i^{(n)}\ge0.8.
\]
No rounding is applied before these comparisons. There is no random sampling. The resulting sets are nonempty and contiguous at every evaluated budget. For numerical checks,
\[
\begin{aligned}
\mathsf{ER}_{\delta,\gamma}^{\mathrm{suf}}(20)&=\{4\},\\
\mathsf{ER}_{\delta,\gamma}^{\mathrm{suf}}(60)&=\{3,4,5,6\},\\
\mathsf{ER}_{\delta,\gamma}^{\mathrm{suf}}(180)&=\{3,4,\ldots,10\}.
\end{aligned}
\]
The right panel counts the candidate review lengths that meet both criteria. Shading and steps connect the evaluated budget points; they do not locate the exact budgets at which the set changes.

\clearpage
\section{Conditional Honest-Verifier Zero-Knowledge Preservation}
\label{app:hvzk}
\label{sec:conditional_zk}

Stopping once enough evidence has accumulated raises a privacy question: what can the verifier learn about the prover's private information from when and how the interaction ends? Suppose that the full record of an interaction can already be efficiently simulated from the verifier's inputs, without access to the prover's private state. The simulated record must be computationally indistinguishable from a real one. If the actual stopped record is obtained by applying the prescribed stopping rule to the full record, applying that rule to the simulated record gives a simulator for the stopped interaction as well. The following assumptions make this argument precise, including the stopping time, final verdict, and records of rejected or timed-out executions.

Use $x=\langle s,i_{\mathrm{pub}}\rangle$, $z=i_{\mathrm{priv}}$, and the uniformity and polynomial-resource conventions of Corollary~\ref{thm:conditional_ip}. Fix the specified verifier, including its check-execution law and protocol parameters. For each covered true instance $x\in\mathcal Y$, prespecify a nonempty family $\Theta_H(\kappa,x)$ of honest private states. The state $\theta$ is fixed during execution; fresh prover randomness is sampled separately.

Write $\mathsf{View}^{\theta}_{b}=\operatorname{View}^{P_\theta}_{V^{b}(z)}(1^\kappa,x)$ for $b\in\{\mathrm{cont},\mathrm{stop}\}$. Full views include the inputs, verifier random tape, initial response $y_s$, all raw visible messages (including any that trigger rejection), visible tool/check records, and verifier-local state. They cover the entire specified execution distribution, including rejection and timeout, without conditioning on acceptance. Analysis-only events $\mathcal A_m,\mathcal E_m$ are not additional observations.

The continuation verifier $V^{\mathrm{cont}}$ agrees with the actual verifier $V^{\mathrm{stop}}$ through the stopping point of $V^{\mathrm{stop}}$ in initialization, adaptive challenges, checking, and state updates, including certificate-dependent choices. After $V^{\mathrm{stop}}$ stops, $V^{\mathrm{cont}}$ follows a fixed continuation rule. Let $F$ be a uniform, $\theta$-independent, deterministic polynomial-time map that retains the stopped prefix and computes the certificate and terminal records using only the view. It is defined on all encoded strings, returning a fixed marker on malformed encodings. With a common polynomial-length random-tape convention, $F$ retains that tape and satisfies
\[
\mathsf{View}^{\theta}_{\mathrm{stop}}
\equiv_d F\!\left(\mathsf{View}^{\theta}_{\mathrm{cont}}\right).
\]

Assume that a single uniform simulator $\mathsf{Sim}_0$ runs in probabilistic polynomial time. It receives only $(1^\kappa,x,z)$ and the fixed public protocol description, with no further advice depending on $\theta$ or access to $P_\theta$, and satisfies
\[
\left\{\mathsf{View}^{\theta}_{\mathrm{cont}}\right\}
\approx_c
\left\{\mathsf{Sim}_0(1^\kappa,x,z)\right\}.
\]
For each probabilistic polynomial-time distinguisher, one negligible advantage bound applies uniformly over all covered $x\in\mathcal Y$, $\theta\in\Theta_H(\kappa,x)$, and allowed polynomial-length $z$. The guarantee is relative to $(x,z)$; $z$ need not be independent of $\theta$.

\begin{corollary}[Conditional HVZK preservation]
\label{prop:conditional_zk}
Then the stopped interaction is computationally honest-verifier zero knowledge on the covered family, with simulator
\[
\mathsf{Sim}_{\mathrm{stop}}(1^\kappa,x,z)
=F\!\left(\mathsf{Sim}_0(1^\kappa,x,z)\right).
\]
\end{corollary}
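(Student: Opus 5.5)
The argument is a hybrid/data-processing argument: apply the prefix map $F$ to both sides of the assumed continuation-view indistinguishability. First, I would check that $\mathsf{Sim}_{\mathrm{stop}}=F\circ\mathsf{Sim}_0$ is a legitimate simulator. It is uniform probabilistic polynomial time, since $\mathsf{Sim}_0$ is uniform PPT with polynomial-length output and $F$ is a uniform deterministic polynomial-time map defined on all strings. It receives only $(1^\kappa,x,z)$ and the public protocol description, so it uses no advice depending on $\theta$ and no access to $P_\theta$. The marker convention for malformed encodings guarantees that $F$ is total on whatever $\mathsf{Sim}_0$ outputs.

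Second, I would reduce distinguishing to the continuation case. Fix any PPT distinguisher $D$ for stopped views and define $D'=D\circ F$. This is again PPT and uniform, because $F$ is $\theta$-independent and efficient. For every covered $x\in\mathcal Y$, every $\theta\in\Theta_H(\kappa,x)$ and every allowed $z$, I would write
\[
\bigl|\Pr[D(\mathsf{View}^{\theta}_{\mathrm{stop}})=1]-\Pr[D(F(\mathsf{Sim}_0))=1]\bigr|
=\bigl|\Pr[D'(\mathsf{View}^{\theta}_{\mathrm{cont}})=1]-\Pr[D'(\mathsf{Sim}_0)=1]\bigr|.
\]
The first probabilities agree because $\mathsf{View}^{\theta}_{\mathrm{stop}}\equiv_d F(\mathsf{View}^{\theta}_{\mathrm{cont}})$. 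The right-hand side is bounded by the single negligible function that the assumption attaches to $D'$, and this bound holds uniformly over the covered family. Since the views are full, unconditional views, this covers acceptance, rejection, timeout and raw inadmissible messages together.

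The main obstacle is making sure the distributional identity and the uniformity quantifiers are used correctly rather than informally. Several checks are needed. The shared random-tape convention must let $F$ retain the tape, so that the stopped view's tape is distributed correctly. The continuation verifier must agree with $V^{\mathrm{stop}}$ up to its stopping point, so that the certificate-dependent stopping time and verdict can be computed from the view alone; $F$ computes them this way, and the analysis events $\mathcal A_m,\mathcal E_m$ are never needed. Finally, the negligible bound for $D'$ must not depend on $\theta$ or $z$, which holds because $D'$ is built from $D$ and the fixed $F$ only.

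With these checks in place, the displayed equality gives computational HVZK for the stopped interaction on the covered family, with simulator $\mathsf{Sim}_{\mathrm{stop}}$.
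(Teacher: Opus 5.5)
Your proposal is correct and matches the paper's proof. You check that $F\circ\mathsf{Sim}_0$ is uniform PPT, compose any stopped-view distinguisher with $F$, and use $\mathsf{View}^{\theta}_{\mathrm{stop}}\equiv_d F(\mathsf{View}^{\theta}_{\mathrm{cont}})$ to transfer the advantage exactly; your added checks on uniformity over $(x,\theta,z)$ and totality of $F$ spell out what the paper's three-line proof leaves implicit.
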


\begin{proof}
The composed simulator is uniform probabilistic polynomial time. A distinguisher for the stopped views, composed with $F$, would distinguish the underlying views. The equality in distribution above therefore proves the claim.
\end{proof}

We do not construct $\mathsf{Sim}_0$ for ordinary LLM dialogue or establish the simulation assumption in the certification examples.

\section{Diagnostic Bounds for Adaptive Challenge Mixtures}
\label{app:anchored_validity}

This appendix formalizes the \hyperref[sec:anchored_review_example]{anchored construction} introduced at the end of Section~\ref{sec:zkp}. It is one sufficient way to obtain the diagnostic bound required by Theorem~\ref{thm:anytime_soundness}; Appendix~\ref{app:worked_certificates} applies it to matrix-product and scheduling checks.

Before execution, fix a map $\alpha_m(h,i)$ for the information used by the reference check. This map must be independent of how the prover presents the claim after commitment. Let $\nu_m(\cdot\mid\alpha_m(h,i))$ be an admissible conditional distribution of reference challenges. The combined challenge distribution is
\begin{equation}
\mu_m(\cdot\mid h,i)
=\lambda_m^{\mathrm{anc}}\nu_m(\cdot\mid\alpha_m(h,i))
+(1-\lambda_m^{\mathrm{anc}})\widetilde\mu_m(\cdot\mid h,i),
\label{eq:anchored_challenge_mixture}
\end{equation}
where $\lambda_m^{\mathrm{anc}}\in[0,1]$ is $\mathcal F_{m-1}$-measurable and $\widetilde\mu_m$ may adapt arbitrarily to the transcript; the prover sees only the realized challenge.

\begin{proposition}[Anchored local validity]
\label{prop:anchored_local_validity}
For a false statement $s$, prover $P^*$, reached history $h$, and proposed challenge $x$, let $d_m^{s,P^*}(h,x)$ be the probability that the prescribed correct checker rejects the false statement after $P^*$ responds. If an $\mathcal F_{m-1}$-measurable $\rho_m\in[0,1]$ satisfies, uniformly over all false statements, adaptive admissible provers, and reachable active histories,
\begin{equation}
\int d_m^{s,P^*}(h,x)\,
\nu_m(dx\mid\alpha_m(h,i))\ge\rho_m,
\label{eq:reference_detectability}
\end{equation}
then Assumption~\ref{assumption:effective_verifier_confidence} may use
\[
q_m=1-\lambda_m^{\mathrm{anc}}\rho_m,
\qquad
\bar q_m=1-c_m\lambda_m^{\mathrm{anc}}\rho_m,
\]
whenever $0<c_m\lambda_m^{\mathrm{anc}}\rho_m<1$, preserving anytime-valid soundness however the remaining challenges are chosen within the stated conditions.
\end{proposition}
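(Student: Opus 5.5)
The plan is to verify item 2 of Assumption~\ref{assumption:effective_verifier_confidence} directly, namely $\mathbb E^{s,P^*}[r_m^\circ\mid\mathcal F_{m-1}]\le 1-\lambda_m^{\mathrm{anc}}\rho_m$. The rest then follows from Theorem~\ref{thm:anytime_soundness}. Items 1 and 4 are unaffected by the choice of challenge kernel, apart from substituting the new $q_m$ into $\bar q_m=1-c_m(1-q_m)$, which gives $\bar q_m=1-c_m\lambda_m^{\mathrm{anc}}\rho_m$. The side condition $0<c_m\lambda_m^{\mathrm{anc}}\rho_m<1$ is exactly the requirement $0<\bar q_m<1$ in~\eqref{eq:implemented_false_pass}. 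Since $q_m$ must be $\mathcal F_{m-1}$-measurable, the first step is to note that $\lambda_m^{\mathrm{anc}}$ and $\rho_m$ are both $\mathcal F_{m-1}$-measurable by hypothesis, so their product is as well.

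For the main computation, fix a false $s$, an adaptive admissible $P^*$, and a reachable active history $h$ (an atom of $\mathcal F_{m-1}$, or more generally a regular conditional version). Conditional on $h$, the challenge is drawn from $\mu_m(\cdot\mid h,i)$. The prover then responds, possibly with randomness, and the correct checker passes with probability $r_m^\circ$. Averaging over the response and the checker randomness gives, for each $x$, a conditional pass probability $1-d_m^{s,P^*}(h,x)$. By the tower property,
\[
\mathbb E[r_m^\circ\mid\mathcal F_{m-1}]=\int\bigl(1-d_m^{s,P^*}(h,x)\bigr)\,\mu_m(dx\mid h,i).
\]
Expanding the mixture~\eqref{eq:anchored_challenge_mixture} gives two terms. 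The adaptive term satisfies $\int(1-d)\,d\widetilde\mu_m\le 1$ trivially, because $d\ge0$. The reference term satisfies $\int(1-d)\,d\nu_m\le 1-\rho_m$ by~\eqref{eq:reference_detectability}. The total is therefore at most $\lambda_m^{\mathrm{anc}}(1-\rho_m)+(1-\lambda_m^{\mathrm{anc}})=1-\lambda_m^{\mathrm{anc}}\rho_m$.

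The main obstacle is making sure that the function $d_m^{s,P^*}(h,x)$ inside the integral is the same object for both kernels. In other words, the prover's response law given $(h,x)$ must not depend on which component produced $x$. This is where the statement that the prover sees only the realized challenge is used. I would formalize it by introducing a latent component indicator $B_m\sim\mathrm{Bernoulli}(\lambda_m^{\mathrm{anc}})$ that is independent of the prover given $(h,x)$ and excluded from the prover-visible history. The response kernel $Q$, or its stochastic version, is a function of $h\parallel x$ and $\theta$ only, so it is conditionally independent of $B_m$ given $(h,x)$. This justifies the factorization. If $B_m$ is recorded in the verifier's local state $\upsilon$, that is harmless, because the check's outcome law given $(h,x,y)$ does not depend on it.

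A second point to handle is that $\alpha_m(h,i)$ is fixed in advance and does not depend on how the prover presents the claim. This is what makes~\eqref{eq:reference_detectability} a worst-case bound that the adaptive prover cannot manipulate, and the hypothesis is stated uniformly over reachable histories, so it applies at $h$. The final step is to invoke Theorem~\ref{thm:anytime_soundness} pointwise in $P^*$. Because the argument only used $d\ge0$ on the $\widetilde\mu_m$ component, the conclusion holds however $\widetilde\mu_m$ is chosen.
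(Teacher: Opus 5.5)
Your proposal is correct and essentially matches the paper's proof, which lower-bounds the correct-checker rejection probability under $\mu_m$ by $\lambda_m^{\mathrm{anc}}\rho_m$ by discarding the adaptive component (your complementary bound on the pass probability) and then obtains $\bar q_m$ from execution reliability as in Theorem~\ref{thm:anytime_soundness}. You also make explicit two points the paper leaves implicit: $q_m$ is $\mathcal F_{m-1}$-measurable, and the prover's response law given $(h,x)$ does not depend on which mixture component produced $x$.
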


\begin{proof}
At a reached history,
\[
\int d_m^{s,P^*}(h,x)\,\mu_m(dx\mid h,i)
\ge\lambda_m^{\mathrm{anc}}
\int d_m^{s,P^*}(h,x)\,\nu_m(dx\mid\alpha_m(h,i))
\ge\lambda_m^{\mathrm{anc}}\rho_m.
\]
Thus the pass probability under correct execution is at most $q_m$, and human execution reliability ensures that the false statement is rejected in this round with probability at least $c_m\lambda_m^{\mathrm{anc}}\rho_m$, yielding $\bar q_m$.
\end{proof}

For a passed diagnostic round, the resulting log evidence gain is
\[
g_m=-\log\!\bigl(1-c_m\lambda_m^{\mathrm{anc}}\rho_m\bigr).
\]
The false-pass bound underlying this gain averages over both components of the challenge policy, so the same gain is credited to every passed diagnostic round. Crediting the adaptive component with additional diagnostic strength requires a separately justified false-pass bound.

When a false-pass bound is available for an individual question, the verifier can use it directly if that question is selected deterministically; random selection instead requires averaging over the selection policy. Specifically, suppose a candidate challenge $z\in C(h_{m-1},i)$ has a false-pass bound $q_m(z)$, justified before observing the response and uniformly over the false statements, provers, and histories required by Assumption~\ref{assumption:effective_verifier_confidence}. If the verifier selects $z$ deterministically from the current history, then $q_m=q_m(z)$. With execution bound $c_m$ and $0<1-c_m(1-q_m(z))<1$, a passed check contributes
\[
g_m(z)=-\log\!\bigl(1-c_m(1-q_m(z))\bigr)
\]
to the log certificate. A larger $g_m(z)$ supplies more evidence per pass, while response adequacy also matters for completeness. For a randomized selection policy, the certificate instead uses a bound on the pass probability averaged over the policy's choices, as in the mixture above.

\section{Worked Certification Examples}
\label{app:worked_certificates}

This appendix illustrates the certification criteria of Section~\ref{sec:fallibility}. The matrix-product check yields a nonempty certifiable region, while the scheduling example shows how the required number of checks can exceed the review budget.

\subsection{A Matrix-Product Certificate}
\label{sec:freivalds_example}
Fix a prime $p$, a dimension $d$, input matrices $\mathbf A,\mathbf B\in\mathbb F_p^{d\times d}$, and an LLM-produced candidate $\mathbf C\in\mathbb F_p^{d\times d}$ before initialization. The target statement is $s_{\mathrm{MM}}:\mathbf A\mathbf B=\mathbf C$ over the declared field. The verifier has an independent exact-arithmetic checker and canonical encodings of the frozen matrices. In diagnostic round $m$, it samples a fresh uniform vector $r_m\in\{0,1\}^d$ and asks the prover for
\[
u_m=\mathbf B r_m,\qquad
v_m=\mathbf A u_m,\qquad
w_m=\mathbf C r_m.
\]
The prescribed checker independently recomputes all three vectors and passes only if all requested vectors are supplied, agree with its computation, and satisfy $v_m=w_m$. All arithmetic is exact in $\mathbb F_p$; this claim concerns the fixed matrix product, not unrestricted correctness of the program that produced it.

This is Freivalds' randomized matrix-product check \citep{freivalds1977probabilistic}. To see its diagnostic guarantee, suppose $\mathbf D:=\mathbf A\mathbf B-\mathbf C\ne0$ and choose a nonzero entry $D_{jk}$. Conditional on all coordinates of $r_m$ except $r_{m,k}$, the equation
\[
(\mathbf D r_m)_j
=D_{jk}r_{m,k}+\sum_{\ell\ne k}D_{j\ell}r_{m,\ell}=0
\]
holds for at most one of $r_{m,k}=0,1$. Hence
\[
\Pr(\mathbf A(\mathbf B r_m)=\mathbf C r_m\mid\mathcal F_{m-1})
\le\tfrac12.
\]
The matrices are frozen and the new vector is independent of the preceding history, so this bound holds after every adaptive dialogue history. An arbitrary prover can only further reduce the pass probability under correct execution by supplying inconsistent vectors. Thus $q=1/2$ is justified uniformly over all false claims and admissible provers in this task domain. On a true claim, an exact honest response passes every correctly executed check.

\paragraph{Five checks meet both targets.}
With ideal responses and execution, $a=c=1$ and five passes yield $R_5=2^{-5}<0.05$ with true acceptance probability one. For fallible execution, fix $T=10$, $\delta=0.05$, and $\gamma=0.10$, and suppose the history-uniform bounds required in Section~\ref{sec:zkp} hold with $a=c=0.99$. Then
\[
\begin{aligned}
\bar q&=1-0.99(1-1/2)=0.505,\\
R_5&=0.505^5\approx0.032844,\qquad
C_5=(0.99^2)^5\approx0.904382.
\end{aligned}
\]
On an all-pass path, $M_n=1/R_n$ crosses $1/\delta=20$ first at $n=5$: $R_4\approx0.065038>0.05$. Since $C_6\approx0.886385<0.90$ and both products decrease with $n$, the exact product-based region is
\begin{equation}
\mathsf{ER}_{0.05,0.10}=\{5\}.
\label{eq:freivalds_positive_region}
\end{equation}
For this same example, the additive diagnostic condition is $0.495n\ge\log20$, which first holds at $n=7$. By then, the true-acceptance lower bound has fallen to $C_7=0.99^{14}\approx0.868746<0.90$ and decreases for longer reviews. Thus the additive sufficient region is empty, even though the exact product criterion certifies the five-round design.

\paragraph{Adaptive follow-ups and resources.}
Use a fresh random-vector check with probability $\lambda^{\mathrm{anc}}\in(0,1]$; otherwise, choose a vector based on the dialogue. Both choices draw from the same set of vectors and use the same checker. Appendix~\ref{app:anchored_validity} then gives $q=1-\lambda^{\mathrm{anc}}/2$ and $\bar q=1-c\lambda^{\mathrm{anc}}/2$. This bound averages over both ways of choosing a challenge before the round. It is used even when the realized challenge was chosen from the dialogue.

Each anchor check uses $d$ random bits, three exact matrix--vector products, and $O(d)$ comparisons and response entries. For fixed $p$, checking and storage cost $O(d^2)$. Under the effort convention of Section~\ref{sec:fallibility}, the five-round design requires $\sum_{i=1}^{5}e_i\le E$.

\subsection{A Scheduling Certificate beyond the Review Budget}
\label{sec:anchor_budget}
Apply the \hyperref[sec:anchored_review_example]{anchored construction} of Section~\ref{sec:zkp} to a fixed ten-session schedule. One availability check per session covers its speakers and room; two checks for each of the 45 session pairs test room and speaker conflicts. Both branches use this complete list of $J=100$ constraints and the same checker; the random branch samples uniformly with replacement.

Fix $\lambda^{\mathrm{anc}}=0.2$, $c=0.9$, and $\delta=0.05$. Then $q=0.998$ and $\bar q=0.9982$. With every scheduled round diagnostic, the all-pass certificate and required number of rounds are
\[
M_n=\bar q^{-n},\qquad
\left\lceil\frac{\log(1/\delta)}{-\log\bar q}\right\rceil
=1663.
\]
Thus $K_\delta=1663$ only if $T\ge1663$. At $T=20$, even 20 passes cannot reach the threshold; the primary protocol always rejects at or before timeout. Its false-acceptance and true-acceptance probabilities are then both zero.

For fixed-length protocols with $a=1$ and completeness target $1-\gamma=0.8$, the first length meeting the false-acceptance target has $C_{1663}=0.9^{1663}\approx8.04\times10^{-77}$. Since $C_n$ decreases with $n$, no horizon satisfies both product criteria. The execution bound $c=0.9$ must hold uniformly over histories throughout the chosen horizon.

The 1663-round requirement reflects how little guaranteed evidence each pass contributes. The random branch is used with probability $0.2$, and its uniform draw detects at least one violated constraint among 100 with probability at least $1/100$ under correct execution. Including the execution bound gives a detection lower bound of $0.9\times0.2/100=0.0018$, so each pass adds only $-\log(0.9982)\approx0.001802$ toward the threshold $\log20$. The certificate credits no additional detection from the dialogue-based branch. With ideal responses and execution, checking all 100 constraints once would instead settle the fixed claim. Such an exhaustive audit uses a different acceptance rule and needs its own execution-error analysis.

\subsection{Reproducing the Global and Local Comparison}
\label{sec:comparison_reproduction}

This subsection reproduces Table~\ref{tab:global_local_comparison} in Section~\ref{sec:separation}. Both designs use frozen $20$-by-$20$ matrices over $\mathbb F_{101}$. Direct multiplication gives $q_{\mathrm{os}}=0$ and $a_{\mathrm{os}}=1$. The interactive local design performs five Freivalds checks, giving $q=1/2$ by Appendix~\ref{sec:freivalds_example}, with $a_i=0.99$ for independently checked prover replies.

\paragraph{Resource and reliability conditions.}
At $E=120$, allocate $e_{\mathrm{os}}=120$ to the global check and $e_i=24$ to each local round. Both designs use the same trusted interface for validated inputs and exact arithmetic; subsequent input selection and tool-use errors must satisfy the execution conditions.

Ordinary dense multiplication takes $8{,}000$ field multiplications and $7{,}600$ additions globally, versus $6{,}000$ and $5{,}700$ for five local checks. Including comparisons, both fit a $20{,}000$-operation cap. With fixed-width entries and a shared stored snapshot, a $4{,}096$-word communication cap covers the inputs, challenges, replies, checker displays, and metadata for either design. These are the resource caps used for this comparison, not lower bounds on what every implementation requires.

Use the functions $f,r,b$ from Appendix~\ref{app:fig3_reproducibility}, with
\[
\xi_\iota=10,\quad
\lambda_i=0.010+0.001(i-1),\quad
\lambda_{\mathrm{os}}=0.20,\quad
\delta=0.05,\quad \gamma=0.10.
\]
The load values are assumptions about the compared review procedures; they are not inferred from the arithmetic counts. With these values, $f(24,10)=f(120,10)=0.995$, so the execution bounds are
\[
c_i=0.995\exp\!\left(-\frac{0.010+0.001(i-1)}{r(10)}\right),
\qquad
c_{\mathrm{os}}=0.995\exp\!\left(-\frac{0.20}{r(10)}\right)
\approx0.938125.
\]
Substituting into the product bounds gives
\[
\begin{aligned}
R_5&=\prod_{i=1}^{5}(1-c_i/2)\approx0.032602,
& C_5&=\prod_{i=1}^{5}(0.99c_i)\approx0.911219,\\
R_{\mathrm{os}}&=1-c_{\mathrm{os}}\approx0.061875,
& C_{\mathrm{os}}&=c_{\mathrm{os}}\approx0.938125.
\end{aligned}
\]
These are the entries in Table~\ref{tab:global_local_comparison}. The local bounds meet both targets, $R_5\le0.05$ and $C_5\ge0.90$. Since $R_4\approx0.064617>0.05$, the first acceptance threshold crossing on an all-pass path occurs at round five. The global false-acceptance upper bound exceeds $0.05$, although its true-acceptance lower bound exceeds $0.90$. The additive diagnostic sum for the local design is $\tfrac12\sum_{i=1}^{5}c_i\approx2.478731<\log20$, so that sufficient criterion does not certify this design.

The load sensitivity reported in Section~\ref{sec:separation} follows by replacing only $\lambda_{\mathrm{os}}=0.20$ with $0.10$. This gives $c_{\mathrm{os}}\approx0.966144$, hence $R_{\mathrm{os}}\approx0.033856$ and $C_{\mathrm{os}}\approx0.966144$. The global bounds then meet both targets as well, showing how the comparison depends on the assumed execution reliability.

\end{document}